\def\eqref#1{equation~\ref{#1}}
\def\1{\bm{1}}
\def\vtheta{{\bm{\theta}}}
\DeclareMathAlphabet{\mathsfit}{\encodingdefault}{\sfdefault}{m}{sl}
\SetMathAlphabet{\mathsfit}{bold}{\encodingdefault}{\sfdefault}{bx}{n}
\definecolor{mygrey}{gray}{0.6}
\theoremstyle{plain}
\newtheorem{theorem}{Theorem}[section]
\theoremstyle{definition}
\newtheorem{assumption}[theorem]{Assumption}
\theoremstyle{remark}
\newcommand{\method}{\text{GRACE}\xspace}
\definecolor{cvprblue}{rgb}{0.21,0.49,0.74}
\title{Dynamic Feedback Engines: Layer-Wise Control for Self-Regulating Continual Learning}
\author{
    Hengyi Wu$^{1}$\thanks{Correspondence to: {\tt hw987@umd.edu}} \quad 
    Zhenyi Wang$^{2}$ \quad 
    Heng Huang$^{1}$ \\
    $^{1}$University of Maryland, College Park \quad $^{2}$University of Central Florida
}
\begin{document}
\maketitle
\begin{abstract}
Continual learning aims to acquire new tasks while preserving performance on previously learned ones, but most methods struggle with catastrophic forgetting. Existing approaches typically treat all layers uniformly, often trading stability for plasticity or vice versa. However, different layers naturally exhibit varying levels of uncertainty (entropy) when classifying tasks. High-entropy layers tend to underfit by failing to capture task-specific patterns, while low-entropy layers risk overfitting by becoming overly confident and specialized.
To address this imbalance, we propose an entropy-aware continual learning method that employs a dynamic feedback mechanism to regulate each layer based on its entropy. Specifically, our approach reduces entropy in high-entropy layers to mitigate underfitting and increases entropy in overly confident layers to alleviate overfitting. This adaptive regulation encourages the model to converge to wider local minima, which have been shown to improve generalization.
Our method is general and can be seamlessly integrated with both replay- and regularization-based approaches. Experiments on various datasets demonstrate substantial performance gains over state-of-the-art continual learning baselines. 

\end{abstract}    

\section{Introduction}

Continual learning \citep{Masana2023Class, DeLange2022} aims to enable models to acquire new tasks without suffering from catastrophic forgetting \citep{MCCLOSKEY1989109, French1999Catastrophic}, the degradation of performance on previously learned knowledge. This issue stems from the fundamental stability-plasticity dilemma; a model must be stable enough to preserve old knowledge while remaining sufficiently plastic to learn new information. To navigate this trade-off, research has predominantly explored three families of approaches: regularization-based, replay-based, and parameter-expansion methods. 

A primary limitation of many existing approaches is their reliance on simple regularization techniques, such as L1 or L2, which often guide the model toward sharp minima in the loss landscape, which are known to generalize poorly \citep{foret2021sharpnessawareminimizationefficientlyimproving}. This convergence to a narrow "deep well" rather than a broad "flat valley" is a form of overfitting that degrades test-time performance. Furthermore, these methods are typically layer-agnostic, lacking any mechanism to modulate learning based on layer-specific performance. By failing to preserve the knowledge in well-performing layers while simultaneously encouraging underperforming layers to adapt, they compromise the model's ability to balance stability and plasticity, ultimately hindering overall accuracy. 

Our primary contribution is a novel technique we call \textbf{Self-Adaptive Entropy Scaling}. While regularizing the final layer's classification entropy is a known and effective technique for improving model robustness \citep{cha2021cpr}, a naive, uniform application to all layers has a significant drawback. Indiscriminately penalizing layers that already exhibit high entropy can be detrimental, potentially degrading valuable learned representations. To address this, our entropy scaling method adaptively adjusts the regularization strength for each layer through the lens of Bayesian inference. The penalty is applied strongly to layers with low entropy (i.e., those with over-confident outputs) while being reduced for layers that already possess high-entropy features, preserving their diversity, as illustrated in Figure \ref{fig:placeholder}. 

To further enhance the efficacy of entropy scaling, we introduce a complementary adaptive training mechanism.
Our adaptive training modulates the plasticity of each layer based on its performance on previous tasks. Specifically, we constrain updates for high-performing layers to preserve their acquired knowledge, while conversely amplifying updates for underperforming layers to encourage more rapid adaptation. 
An important point is that our method does not disturb the natural tendency of earlier layers to learn general representations and later layers to learn task specific features. Our goal is not to force all layers to maintain identical entropy levels, but rather to prevent pathological cases where intermediate layers become either overly confident (low-entropy) or entirely uninformative (high-entropy). The entropy-based regularization acts as a soft constraint that preserves diversity and uncertainty calibration, not as a strict equalization across layers. Specifically, the regularization term is adaptive and data-dependent: it scales by the layer’s predictive variance, allowing layers to specialize while still maintaining a balanced uncertainty profile. Early layers are naturally expected to encode general, high-entropy representations, while deeper layers capture more discriminative, lower-entropy signals. Our formulation explicitly respects this hierarchy—by penalizing extreme deviations rather than entropy differences per se.

To demonstrate the effectiveness of our approach, we conduct both theoretical analysis and empirical evaluation. Theoretically, we show that our method leads to a tighter generalization error bound. Empirically, our experiments on standard image classification benchmarks confirm that the proposed approach significantly improves average accuracy while simultaneously reducing forgetting, outperforming state-of-the-art methods. Our contributions are as follows:
\begin{itemize}
   \item We propose a novel framework for continual learning that employs a dynamic feedback mechanism to apply layer-aware regularization, overcoming the limitations of layer-agnostic approaches.
\item We design a new algorithm that integrates two techniques, entropy scaling and adaptive training through Bayesian inference, to intelligently modulate plasticity across the network.
\item We conduct in-depth theoretical analysis that firmly supports the effectiveness of our method.
\item We conduct comprehensive experiments on popular continual learning datasets, achieving state-of-the-art results and showing marked improvements in both accuracy and knowledge retention.
\end{itemize}

\section{Related Work}

Continual learning (CL) addresses the challenge of training models on a sequence of tasks without catastrophically forgetting previously acquired knowledge. To this end, three primary classes of methods have been developed. Regularization-based approaches \citep{Rebuffi2017iCaRL, pmlr-v70-zenke17a-Synaptic, nguyen2018variational, aljundi2018memory, yan2024orchestrate} introduce penalty terms into the loss function to constrain updates on parameters critical for past tasks. Another line of work, memory-replay, maintains a buffer of exemplars from previous tasks \citep{shin2017continual, rolnick2019experience, lopez2017gradient, riemer2018learning, pham2021dualnet, arani2022learning, verwimp2021rehearsal} that are revisited during subsequent training to prevent knowledge degradation. A third approach, architecture expansion \citep{rusu2022progressiveneuralnetworks, 8578908PackNet, pmlr-v80-serra18a-Hard, li2019learn, hung2019compacting}, dynamically grows the network by adding new weights or adapters as new tasks arrive, thereby isolating task-specific parameters to prevent interference. 

While these general strategies are effective, recent works have explored output regularization-based approaches. For instance, CPR \citep{cha2021cpr} provides a strong baseline by regulating only the final output layer to find wider local minima. However, this singular focus means that the crucial intermediate layers are not explicitly regularized, potentially limiting their robustness against forgetting. A more closely related method, MOSE \citep{yan2024orchestrate}, addresses this by applying multi-level supervision to receive signals from all layers using a form of reverse self-distillation. Yet, their work treats each layer uniformly, overlooking the inherent functional differences between them. 

There are also works that investigate using entropy regularization in machine learning training. 
LegoGCD \citep{Cao2024Solving} selects known samples with high confidence to encourage the predictions of these samples to be closer to a uniform distribution. However they only consider the prediction accuracy of the final layer, whereas we attach classification heads to earlier layers and ensure their entropy is not too high or too low. CCL \citep{Wang2024Improving} uses distillation from less confident predictions to more confident predictions as an implicit form of entropy regularization. In contrast, our method applies direct regularization to reduce overconfidence.

In our work, we argue that not all layers are created equal. We bridge this gap by introducing a method that dynamically applies supervision across multiple layers, recognizing that earlier and later layers play distinct roles. This allows our model to reap the benefits of broad minima across the entire network, unlike CPR, while also leveraging the unique contributions of each layer. 

\section{Method}

\begin{figure}
    \centering
    \includegraphics[width=0.9\linewidth]{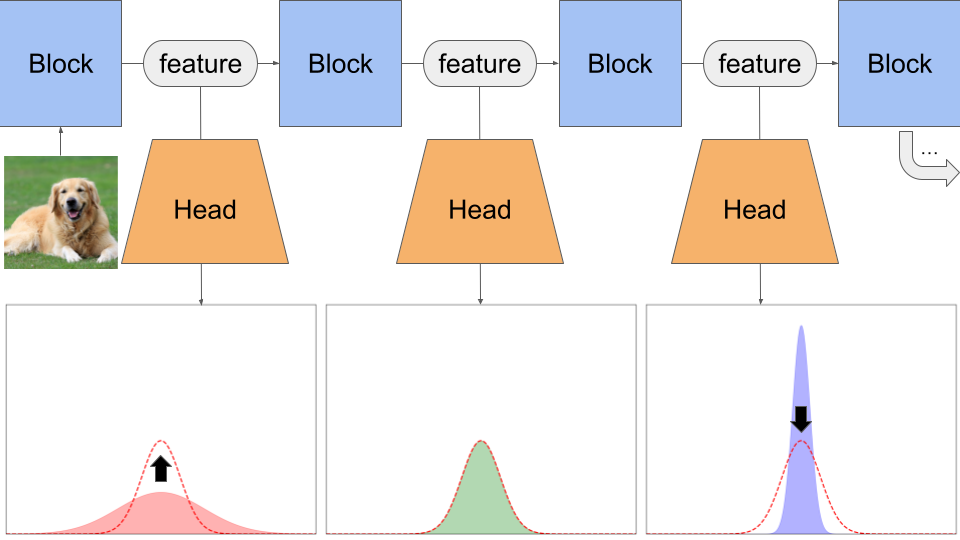}
    \caption{As the input data propagates through each successive block of the network, its feature vector at each layer is fed into a dedicated classification head. We calculate the entropy of the resulting output distribution from each head. This entropy measurement then dynamically adjusts the strength of the regularization term applied to that specific layer, allowing for adaptive regularization throughout the model. Without our method, layer output entropies (solid red, green, and blue) exhibit high variance, indicating that some layers become over-confident while others remain uncertain. Our method guides the entropy of each layer towards a stable, medium-entropy target (dashed red line), promoting more consistent representations throughout the network.}
    \label{fig:placeholder}
\end{figure}

\subsection{Preliminaries}
In the standard continual learning (CL) setup, a model is trained on a sequence of tasks, arriving one after another. Let the sequence of tasks be denoted by $\mathcal{T} = \{\mathcal{T}_1, \mathcal{T}_2, \ldots, \mathcal{T}_N\}$, where $N$ is the total number of tasks. Each task $\mathcal{T}_t$ for $t \in \{1, \ldots, N\}$ is associated with its own data distribution $\mathcal{D}_t = \{(\mathbf{x}_i, y_i)\}$, where $\mathbf{x}_i$ represents the input data and $y_i$ is the corresponding label. Except for a small memory buffer, the data from previous tasks $\mathcal{T}_{1}, \ldots, \mathcal{T}_{t-1}$ is not available when the model is learning the current task $\mathcal{T}_t$.

Our model is represented by a function $f(\cdot; \vtheta)$, parameterized by a set of parameters $\vtheta \in \mathbb{R}^d$. The goal of the model is to learn a mapping from inputs to outputs. Upon observing task $\mathcal{T}_t$, the model updates its parameters $\vtheta$ to minimize a task-specific loss function, $\mathcal{L}_t$. This loss is typically computed as the empirical risk over the data distribution $\mathcal{D}_t$:
\begin{equation}
\mathcal{L}_t(\vtheta) = \mathbb{E}_{(\mathbf{x}, y) \sim \mathcal{D}_t} [\ell(f(\mathbf{x}; \vtheta), y)]
\end{equation}

where $\ell(\cdot, \cdot)$ is a standard loss function, such as cross-entropy for classification.
Let $\vtheta_{t-1}^*$ denote the optimal parameters found after training on tasks up to $\mathcal{T}_{t-1}$. When task $\mathcal{T}_t$ arrives, the learning process aims to find a new set of parameters $\vtheta_t^*$ that minimizes $\mathcal{L}_t(\vtheta)$ without significantly increasing the loss on previous tasks. This is the core challenge of continual learning, known as catastrophic forgetting.
The ideal objective of a continual learning agent is to find a single set of parameters $\vtheta_N^*$ that performs well across all tasks simultaneously. This can be formulated as minimizing the total loss over the entire sequence:
\begin{equation}
\vtheta_N^* = \arg\min_{\vtheta} \sum_{t=1}^{N} \mathcal{L}_t(\vtheta)
\end{equation}

However, due to the sequential and constrained nature of data availability, achieving this joint optimization directly is not feasible. The primary goal of CL methods is to approximate this solution by sequentially updating the parameters $\vtheta$ in a way that balances performance on the current task with the preservation of knowledge from past tasks.
\subsection{\textbf{G}uided Ent\textbf{R}opy-\textbf{A}daptive Feedback for \textbf{C}ontinual L\textbf{E}arning (GRACE)}

Our proposed method, \method, is a framework designed to mitigate catastrophic forgetting in continual learning. The fundamental principle is to dynamically adjust the learning process for each layer based on two key signals: its current output entropy and its historical performance on past tasks. We formulate this as a general optimization problem where we adaptively scale the standard task and augment it with a regularization term that is entropy-scaled.

The general optimization objective for a given task is:
\begin{equation}
    \mathcal{L}(\vtheta) = \mathcal{L}_t(\vtheta) + R_t(\vtheta) = \sum _{l=1} ^L(\alpha_l \cdot \mathcal{L}_l(\vtheta) +  \gamma_l \cdot R_l(\vtheta))
\end{equation}
Here, $\mathcal{L}_l$ represents the primary objective function cross-entropy loss of layer $l$ on the current task, and it is modified by $\alpha_l$, an adaptive training modulator that scales the main loss term based on the layer's historical performance. For each layer, the regularization function $R_l$ is modulated by  $\gamma_l$, an entropy scaling factor. We sum over all the layers $1 \dots L$ to get the total task loss $\mathcal{L}_t$. The following sections detail the two core components of this framework: entropy scaling and adaptive training, which respectively define $\gamma_l$ and $\alpha_l$.

\textbf{Self-Adaptive Entropy Scaling ($\gamma_l$)}: The goal of entropy scaling is to encourage layers with over-confident (low-entropy) outputs to learn more generalizable representations, while protecting the features of layers that already exhibit high-entropy, diverse outputs. To achieve this, our method dynamically modulates the regularization penalty based on a layer's relative entropy compared to other layers within the same mini-batch, rather than its absolute entropy, as illustrated in Figure \ref{fig:placeholder}.

We propose a principled adaptive entropy scaling approach. We model the adaptive entropy scaling factor \( \gamma_\ell \) in layer \( \ell \) as a latent variable in a Bayesian framework. The goal is to infer a suitable regularization strength for each layer, based on its entropy \( H_\ell \). We use variational inference to approximate the posterior distribution over \( \gamma_\ell \) given the observed entropy.
Let \( \gamma_\ell \in \mathbb{R}_+ \) be the entropy regularization strength for layer \( \ell \), and \( H_\ell \in \mathbb{R}_+ \) the entropy observed at that layer. We assume a Gaussian noise model for entropy given regularization: $H_\ell = H^* + \frac{c}{\gamma_\ell} + \varepsilon_\ell, \quad \varepsilon_\ell \sim \mathcal{N}(0, \sigma^2)$

Where $H^*$
 is a target entropy value -- a reference or anchor that represents the desired entropy level for a model layer. We do not need to estimate $H^*$ since our approach can normalize the entropy values across different layers without knowing $H^*$ by treating the expectation of $H$ as $H^*$, illustrated in Eq. (\ref{eq:norm}). $c$ is a constant and $\sigma$ is the standard deviation. Thus, the likelihood becomes:
\[
p(H_\ell \mid \gamma_\ell) = \mathcal{N}\left(H_\ell \mid H^* + \frac{c}{\gamma_\ell}, \sigma^2 \right)
\]

We place a log-normal prior on \( \gamma_\ell \):
\[
p(\gamma_\ell) = \text{LogNormal}(\mu_0, \tau^2) \quad \Rightarrow \quad \log \gamma_\ell \sim \mathcal{N}(\mu_0, \tau^2)
\]
A log-normal prior is chosen for \( \gamma_\ell \) because it ensures positivity, as \( \gamma_\ell > 0 \) by design. It naturally models multiplicative uncertainty, which is appropriate for scaling factors in entropy regularization. The distribution also has heavy tails, allowing the model to flexibly assign both strong and weak regularization across layers. Furthermore, operating in log-space-- where \( \log \gamma_\ell \sim \mathcal{N} \)-- enables efficient variational inference via the reparameterization trick and permits closed-form KL divergence computation.
The posterior over \( \gamma_\ell \) is intractable, so we approximate it via variational inference. Let the variational posterior be:
\[
q_\phi(\gamma_\ell) = \text{LogNormal}(\mu_\phi, \sigma_\phi^2) \quad \Rightarrow \quad \log \gamma_\ell \sim \mathcal{N}(\mu_\phi, \sigma_\phi^2)
\]

We optimize the evidence lower bound (ELBO):
\[
\mathcal{L}(\phi) = \mathbb{E}_{\gamma_\ell \sim q_\phi} \left[
\log p(H_\ell \mid \gamma_\ell)
+ \log p(\gamma_\ell)
- \log q_\phi(\gamma_\ell)
\right]
\]

Therefore, the ELBO becomes:
\begin{align*}
&\mathcal{L}(\phi)=\mathbb{E}_{\gamma_\ell \sim q_\phi} \bigg[
-\frac{1}{2\sigma^2} \left(H_\ell - H^* - \frac{c}{\gamma_\ell} \right)^2  \\
& -\frac{1}{2\tau^2} (\log \gamma_\ell - \mu_0)^2 + \frac{1}{2\sigma_\phi^2} (\log \gamma_\ell - \mu_\phi)^2 
\bigg] + \text{const}
\end{align*}
To avoid optimizing \( \mathcal{L}(\phi) \) explicitly during training, we approximate the \textit{posterior mean}:
\begin{equation}
    \hat{\gamma}_\ell = \mathbb{E}_{q_\phi}[\gamma_\ell] = \exp\left( \mu_\phi + \frac{\sigma_\phi^2}{2} \right)
\end{equation}
Assuming a small variance \( \sigma_\phi^2 \approx 0 \), we approximate $\hat{\gamma}_\ell \approx \exp(\mu_\phi)$. Empirically, we set:
\begin{equation} \label{eq:norm}
    \mu_\phi \approx \tanh\left( z_\ell \right), \quad \text{where } z_\ell = \frac{H_\ell - \mu_H}{\sigma_H}
\end{equation}
For each mini-batch, we first compute the average output entropy $\bar{H}_l$ for every layer $l$. The $\mu_H$ denotes the mean of this set of entropies $\{\bar{H}_1, \dots, \bar{H}_{L}\}$ and $\sigma_H$ denotes the corresponding standard deviation. $z_\ell$ (z-score) denotes the relative entropy for each layer. This z-score, which measures how far a layer's entropy deviates from the batch average, is used to compute the final scaling factor $\gamma_l$. This yields the approximation used in GRACE:
\begin{equation}
    \gamma_\ell \approx \exp\left( \tanh\left( \frac{H_\ell - \mu_H}{\sigma_H} \right) \right)
    \label{eq:gamma}
\end{equation}
For our experiments, the right side of this equation is multiplied by a hyperparameter $\beta$. This formulation ensures an inverse relationship between relative entropy and the regularization strength. A layer with lower-than-average entropy (negative $z_l$) will yield a scaling factor $\gamma_l < 1$, promoting higher entropy to reduce overconfident predictions. Conversely, a layer with higher-than-average entropy (positive $z_l$) will receive a scaling factor $\gamma_l > 1$, thereby strengthening the regularization effect to reduce entropy and alleviate underfitting.
\begin{algorithm}[H]
    \caption{Training with Layer-wise Adaptive Regularization}
    \label{alg:my_algorithm_final}
    \begin{algorithmic}[1]
        \Require ResNet model $f(\cdot; \vtheta)$ with $L$ layers. Sequence of training datasets $D_1, \dots, D_T$. Validation set $D_{val}$. Learning rate $\eta$.
        \State Initialize modulators $\alpha_l \gets 1$ for all $l \in \{1, \dots, L\}$.
        \For{task $t \gets 1$ to $T$} \Comment{Adaptive Training: Calculate $\alpha_l$ for the current task }
            \If{$t > 1$}
                
                \State Let $\mathcal{A}_{\text{set}}$ be an empty set
                \For{$l \gets 1$ to $L$}
                    \State $A_l \gets \text{EvaluateAccuracy}(f(\cdot; \vtheta)_l, D_{val})$ \Comment{Evaluate layer on past data}
                    \State Add $A_l$ to $\mathcal{A}_{\text{set}}$
                \EndFor
                \State $\mu_A \gets \text{Mean}(\mathcal{A}_{\text{set}})$
                \quad $\sigma_A \gets \text{StdDev}(\mathcal{A}_{\text{set}})$
                \quad $\mathcal{L}_t \gets 0$
                \For{$l \gets 1$ to $L$}
                    \State $s_l \gets (A_l - \mu_A) / \sigma_A$
                    \State $\alpha_l \gets e^{\tanh(-s_l)}$ \Comment{Calculate the modulator based on the score}
            
                     \State $\mathcal{L}_t \gets \mathcal{L}_t +  \alpha_l \cdot \mathcal{L}_l$ 
                \EndFor
            \EndIf
            \For{each training epoch} \Comment{Train on current task $t$ }
                \For{each mini-batch $(X_b, Y_b) \in D_t$}
                    \State Perform forward pass to get activations $h_l$ for each layer $l$.
                    \State Let $\mathcal{H}_{\text{set}}$ be an empty list
                    \For{$l \gets 1$ to $L$}
                        \State $p_l \gets \text{softmax}(h_l)$,
                        \quad $H(p_l) \gets - \sum_{i} p_{l,i} \log p_{l,i}$
                        \State $\bar{H}_l \gets \mathbb{E}_{(\mathbf{x},y) \in (X_b, Y_b)}[H(p_l)]$ \Comment{Average entropy over batch}
                        \State Add $\bar{H}_l$ to $\mathcal{H}_{\text{set}}$
                    \EndFor
                    \State $\mu_H \gets \text{Mean}(\mathcal{H}_{\text{set}})$, $\sigma_H \gets \text{StdDev}(\mathcal{H}_{\text{set}})$, $\mathcal{R}_t \gets 0$
                   
                    \For{$l \gets 1$ to $L$}
                        \State $z_{l} \gets (\mathcal{H}_l - \mu_H) / \sigma_H$  
                        \State $\gamma_l \gets \beta e^{\tanh(z_l)}$ 
                        \Comment{Calculate scaling factor based on z-score}
                        \State $\mathcal{R}_t \gets \mathcal{R}_t +  \gamma_l \cdot \bar{H}_l$ 
                    \EndFor
                    \State $\mathcal{L} \gets \mathcal{L}_t + \mathcal{R}_t$
                    \State Update parameters: $\vtheta \gets \vtheta - \eta \nabla_\vtheta \mathcal{L}$.
                \EndFor
            \EndFor
            \State Update $D_{val}$ with representative samples from current task $t$.
        \EndFor
    \end{algorithmic}
\end{algorithm}

\textbf{Adaptive Training ($\alpha_l$)}: Inspired by our proposed Self-Adaptive Entropy Scaling, the principle of adaptive training is to dynamically adjust each layer's plasticity based on its performance on previously seen tasks. This allows us to preserve knowledge in stable, well-performing layers while encouraging adaptation in underperforming ones.
This is implemented via the learning modulator $\alpha_l$. After completing training on a task, we evaluate the average accuracy $A_l$ for each layer on a validation set of past tasks. We then quantify the relative performance of each layer by calculating its z-score,  which measures the deviation from the mean accuracy ($\mu_A$) in units of standard deviation ($\sigma_A$). (Here we use the variable $s$ for "score" to differentiate it from the z-score used in entropy scaling):
$s_l = \frac{A_l - \mu_A}{\sigma_A}$.
This z-score is then mapped to the modulator $\alpha_l$ for the \textit{next} training task. This mapping is designed such that high-performing layers ($s_l > 0$) receive a smaller $\alpha_l$ (e.g., $< 1$), reducing the impact of regularization and thus preserving their weights. Conversely, underperforming layers ($s_l < 0$) receive a larger $\alpha_l$ (e.g., $> 1$), emphasizing their importance. Finally, we calculate $\alpha_l$ as follows, to bound it within a reasonable range around 1: 
\begin{equation}
    \alpha_l = e^{\tanh(-s_l)} 
    \label{eq:alpha}
\end{equation}
We use $\alpha _l$ to modify this $\mathcal{L}_l$ (the loss for the classification head attached to layer $l$ on the current task's data). 
We then present the detailed algorithm in Algorithm \ref{alg:my_algorithm_final}.

\section{Theoretical Analysis}

We perform theoretical analysis about the generalization error with our adaptive entropy control in Theorem \ref{theorythm:pacbayes-entropy} and forgetting bound in Theorem \ref{theorythm:forgetting-drift}.

\begin{assumption}{(Smoothness).}
Each population objective $\mathcal{L}_t$ is $L_t$-smooth; i.e., for all $\vtheta,\vtheta'$,
\[
\big\|\nabla \mathcal{L}_t(\vtheta)-\nabla \mathcal{L}_t(\vtheta')\big\|
\;\le\;
L_t\,\|\vtheta-\vtheta'\|.
\]
\end{assumption}

\begin{assumption}{ (Entropy Lipschitzness).}
Along the training trajectory, each layer-entropy is Lipschitz in the parameters: for all $\vtheta,\vtheta'$ and each layer $\ell$,
\[
| H_{l}(\vtheta) - H_l(\vtheta')|
\;\le\;
c_\ell\,\|\vtheta-\vtheta'\|.
\]
\end{assumption}

\begin{assumption}{(Posterior concentration).}
The training algorithm induces a posterior $q_t$ with finite second moment such that
\[
\mathbb{E}_{\vtheta\sim q_t}\big\|\vtheta-\bar{\vtheta}_t\big\|^{2}
\;\le\;
\sigma_t^{2},
\]
where $\bar{\vtheta}_t := \mathbb{E}_{\vtheta\sim q_t}[\vtheta]$.
\end{assumption}

\paragraph{Cumulative entropy deviation.}
We define the cumulative (layerwise) entropy deviation at task $t$:
\[
\Delta_t
\;:=\;
\sum_{\ell=1}^{L}
\Big( H_l(\vtheta_t) - H^{\ast}_{\ell,t} \Big)^{2}.
\]
where $H^{\ast}_{\ell,t}$ denotes the target entropy for layer $l$ at task $t$.

\begin{theorem}[PAC-Bayes generalization with entropy control]
\label{theorythm:pacbayes-entropy}
Fix $\delta\in(0,1)$. For task $t$ with sample size $n_t\ge 2$, let
\[
\mathcal{L}_t(\vtheta)\;=\;R_t(\vtheta)\;+\;\lambda\,\Delta_t(\vtheta)
\]
\[
R_t(\vtheta)\;=\;\mathbb{E}_{(x,y)\sim\mathcal{D}_t}\!\big[\ell_t(f_\vtheta(x),y)\big],
\]
where $\ell_t\in[0,1]$ and $\Delta_t(\vtheta)=\sum_{\ell=1}^L \big(H_l(\vtheta)) - H_{\ell,t}^{*}\big)^2$.

Let $\hat{\mathcal{L}}_t(\vtheta)=\hat{R}_t(\vtheta)+\lambda\,\hat{\Delta}_t(\vtheta)$ be the empirical analogue on $n_t$ samples.
For any posterior $q_t$ absolutely continuous w.r.t.\ a prior $p_t$ (chosen before seeing the task-$t$ data), with probability at least $1-\delta$ over the sample,
\begin{align}
   & \mathbb{E}_{\vtheta\sim q_t}\big[\mathcal{L}_t(\vtheta)\big]
\; \le\;
\mathbb{E}_{\vtheta\sim q_t}\big[\hat{\mathcal{L}}_t(\vtheta)\big]\\ \nonumber
&
\;+\;
\sqrt{\frac{\mathrm{KL}(q_t\Vert p_t)+\ln\frac{2\sqrt{n_t}}{\delta}}{2(n_t-1)}}
\;+\; 
\lambda\,\mathbb{E}_{\vtheta\sim q_t}\big[\Delta_t(\vtheta)\big].
\label{theoryeq:main-pac}
\end{align}

Moreover, if $p_t=\mathcal{N}(\vtheta_{t-1},\Sigma_p)$ and $q_t$ has mean $\bar\vtheta_t$ and covariance $\Sigma_q$,
\[
\mathrm{KL}(q_t\Vert p_t)
\;\le\;
\frac{\kappa_t}{2}\,\|\bar\vtheta_t-\vtheta_{t-1}\|^2
\;+\;C_t,
\;\;
\kappa_t:=\lambda_{\max}(\Sigma_p^{-1}),\;\;
\]
\[
C_t:=\tfrac{1}{2}\!\left(\mathrm{tr}(\Sigma_p^{-1}\Sigma_q)-k+\ln\frac{\det\Sigma_p}{\det\Sigma_q}\right).
\]
\end{theorem}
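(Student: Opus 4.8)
The plan is to apply an off-the-shelf PAC-Bayes inequality only to the bounded risk $R_t$ and then re-attach the entropy-deviation penalty $\lambda\Delta_t$ additively. First I would record that $\Delta_t(\vtheta)=\sum_{\ell=1}^{L}\big(H_\ell(\vtheta)-H^{\ast}_{\ell,t}\big)^2$ and its empirical analogue $\hat\Delta_t(\vtheta)$ are finite sums of squares, hence nonnegative for every $\vtheta$; crucially the bound to be proved never compares $\Delta_t$ with $\hat\Delta_t$ --- the population penalty $\lambda\,\mathbb{E}_{q_t}[\Delta_t]$ enters only as an additive slack term while the empirical penalty $\lambda\,\mathbb{E}_{q_t}[\hat\Delta_t]$ is already folded into $\mathbb{E}_{q_t}[\hat{\mathcal{L}}_t]$ --- so all of the probabilistic content reduces to the ordinary generalization gap between $R_t$ and $\hat R_t$.

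Next I would invoke a McAllester--Maurer-type PAC-Bayes bound for losses $\ell_t\in[0,1]$. Since the prior $p_t=\mathcal{N}(\vtheta_{t-1},\Sigma_p)$ is fixed before the task-$t$ sample is observed (it depends only on parameters inherited from task $t-1$), with probability at least $1-\delta$, simultaneously for every posterior $q_t$ absolutely continuous w.r.t.\ $p_t$,
\[
\mathbb{E}_{\vtheta\sim q_t}\!\big[R_t(\vtheta)\big]\;\le\;\mathbb{E}_{\vtheta\sim q_t}\!\big[\hat R_t(\vtheta)\big]\;+\;\sqrt{\frac{\mathrm{KL}(q_t\Vert p_t)+\ln\frac{2\sqrt{n_t}}{\delta}}{2(n_t-1)}}.
\]
I would then add $\lambda\,\mathbb{E}_{\vtheta\sim q_t}[\Delta_t(\vtheta)]$ to both sides. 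The left-hand side becomes $\mathbb{E}_{\vtheta\sim q_t}[\mathcal{L}_t(\vtheta)]$ by the definition $\mathcal{L}_t=R_t+\lambda\Delta_t$, and on the right-hand side $\mathbb{E}_{\vtheta\sim q_t}[\hat R_t(\vtheta)]\le\mathbb{E}_{\vtheta\sim q_t}[\hat R_t(\vtheta)]+\lambda\,\mathbb{E}_{\vtheta\sim q_t}[\hat\Delta_t(\vtheta)]=\mathbb{E}_{\vtheta\sim q_t}[\hat{\mathcal{L}}_t(\vtheta)]$ because $\hat\Delta_t\ge0$. This reproduces the claimed inequality verbatim.

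For the ``moreover'' statement I would simply write out the closed-form Gaussian Kullback--Leibler divergence; with $k$ the dimension of $\vtheta$,
\[
\mathrm{KL}(q_t\Vert p_t)=\tfrac12\Big[(\bar\vtheta_t-\vtheta_{t-1})^{\!\top}\Sigma_p^{-1}(\bar\vtheta_t-\vtheta_{t-1})+\mathrm{tr}(\Sigma_p^{-1}\Sigma_q)-k+\ln\tfrac{\det\Sigma_p}{\det\Sigma_q}\Big].
\]
Bounding the quadratic form by its Rayleigh quotient gives $(\bar\vtheta_t-\vtheta_{t-1})^{\!\top}\Sigma_p^{-1}(\bar\vtheta_t-\vtheta_{t-1})\le\lambda_{\max}(\Sigma_p^{-1})\,\|\bar\vtheta_t-\vtheta_{t-1}\|^2=\kappa_t\,\|\bar\vtheta_t-\vtheta_{t-1}\|^2$, and collecting the remaining covariance terms into $C_t$ yields the stated bound.

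The step I expect to demand the most care is pinning down the PAC-Bayes inequality in exactly the stated form: the $\ln(2\sqrt{n_t}/\delta)$ numerator paired with the $2(n_t-1)$ denominator is the Maurer-type refinement rather than the cruder McAllester bound, and one must verify the prior's admissibility (independence of the task-$t$ data) and the ``simultaneously over all $q_t$'' quantifier that lets $q_t$ be the data-dependent training posterior. The one conceptual point worth stressing is that the additive split $\mathcal{L}_t=R_t+\lambda\Delta_t$ is mandatory --- applying PAC-Bayes to the whole $\mathcal{L}_t$ is illegitimate because $\lambda\Delta_t$ is unbounded, while $R_t$ takes values in $[0,1]$ as required. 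The smoothness, entropy-Lipschitzness, and posterior-concentration assumptions play no role in this theorem; they feed only the companion forgetting bound.
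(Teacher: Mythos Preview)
Your proposal is correct and follows essentially the same approach as the paper: apply the Maurer-type PAC-Bayes inequality only to the bounded risk $R_t$, add $\lambda\,\mathbb{E}_{q_t}[\Delta_t]$ to both sides, then absorb $\mathbb{E}_{q_t}[\hat R_t]$ into $\mathbb{E}_{q_t}[\hat{\mathcal{L}}_t]$ via $\hat\Delta_t\ge 0$, and finish with the closed-form Gaussian KL plus the Rayleigh-quotient bound on the quadratic term. Your remarks that boundedness forces the additive split and that the smoothness/Lipschitz/concentration assumptions are irrelevant here are also exactly on point.
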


\begin{theorem}[Forgetting bound via parameter drift]
\label{theorythm:forgetting-drift}
Let $\mathcal{F}_{s\to t}:=\mathcal{L}_s(\vtheta_t)-\mathcal{L}_s(\vtheta_s)$ for $1\le s<t\le T$. 
Assume:

\begin{enumerate}[label=(A\arabic*)]
\item \label{theoryA:boundedloss} $\ell_s\in[0,1]$ and along the optimization trajectory the population objective $\mathcal{L}_s$ has bounded gradient: $\sup_{\vtheta\in\Gamma}\|\nabla \mathcal{L}_s(\vtheta)\|\le L_s$, where $\Gamma$ contains $\{\vtheta_k\}_{k=s}^t$ and the line segments between successive iterates.
\item \label{theoryA:entropy-lip} (Entropy Lipschitzness) For each layer $\ell$, $H(Z_\ell(\vtheta))$ is Lipschitz in $\vtheta$ with constant $c_\ell$ along $\Gamma$.
\item \label{theoryA:local-strong} (Local strong convexity/PL for task $k$) The empirical objective $J_k(\vtheta):=\hat{\mathcal{L}}_k(\vtheta)=\hat{R}_k(\vtheta)+\lambda\,\hat{\Delta}_k(\vtheta)$ is $\mu_k$-strongly convex on the segment between $\vtheta_{k-1}$ and $\vtheta_k$, i.e.,
\[
\big\langle \nabla J_k(\vtheta)-\nabla J_k(\vtheta'),\,\vtheta-\vtheta'\big\rangle \;\ge\; \mu_k\,\|\vtheta-\vtheta'\|^2.
\]
\end{enumerate}
Then
\begin{equation}
    \mathcal{F}_{s\to t}\le
L_s\,\sum_{k=s+1}^{t}
\frac{1}{\mu_k}\Big(
\big\|\nabla \hat{R}_k(\vtheta_{k-1})\big\|
+
2\lambda\,C_\Delta\,\sqrt{\hat{\Delta}_k(\vtheta_{k-1})}
\Big),
\label{theoryeq:forgetting-final}
\end{equation}
\[
C_\Delta:=\Big(\sum_{\ell=1}^L c_\ell^2\Big)^{1/2}.
\]
Equivalently, since $\|\nabla \hat{R}_k\|\le \|\nabla \hat{\mathcal{L}}_k\|+2\lambda C_\Delta\sqrt{\hat{\Delta}_k}$,
\[
\mathcal{F}_{s\to t}
\le
L_s\,\sum_{k=s+1}^{t}
\frac{1}{\mu_k}\Big(
\big\|\nabla \hat{\mathcal{L}}_k(\vtheta_{k-1})\big\|
+
4\lambda\,C_\Delta\,\sqrt{\hat{\Delta}_k(\vtheta_{k-1})}
\Big).
\]
\end{theorem}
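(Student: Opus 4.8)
The plan is to bound $\mathcal{F}_{s\to t}$ by the \emph{total parameter drift} $\|\vtheta_t-\vtheta_s\|$ and then control each one-task increment $\|\vtheta_k-\vtheta_{k-1}\|$ via the local strong convexity of the empirical objective $J_k$, using the first-order optimality of $\vtheta_k$. First I would invoke \ref{theoryA:boundedloss}: since $\mathcal{L}_s$ has gradient norm $\le L_s$ on $\Gamma$, and $\Gamma$ contains the iterates $\vtheta_s,\dots,\vtheta_t$ together with the segments between successive ones, the mean value inequality gives $\mathcal{L}_s$ is $L_s$-Lipschitz along that path, so $\mathcal{F}_{s\to t}=\mathcal{L}_s(\vtheta_t)-\mathcal{L}_s(\vtheta_s)\le L_s\|\vtheta_t-\vtheta_s\|\le L_s\sum_{k=s+1}^t\|\vtheta_k-\vtheta_{k-1}\|$ by the triangle inequality. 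This reduces the theorem to a per-step drift bound.

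\textbf{Per-step drift.} Next I would use that $\vtheta_k$ minimizes $J_k=\hat{\mathcal{L}}_k$, hence $\nabla J_k(\vtheta_k)=0$. Applying \ref{theoryA:local-strong} with $\vtheta=\vtheta_{k-1}$, $\vtheta'=\vtheta_k$ gives $\mu_k\|\vtheta_{k-1}-\vtheta_k\|^2\le\langle\nabla J_k(\vtheta_{k-1})-\nabla J_k(\vtheta_k),\vtheta_{k-1}-\vtheta_k\rangle=\langle\nabla J_k(\vtheta_{k-1}),\vtheta_{k-1}-\vtheta_k\rangle\le\|\nabla J_k(\vtheta_{k-1})\|\,\|\vtheta_{k-1}-\vtheta_k\|$, so $\|\vtheta_k-\vtheta_{k-1}\|\le\mu_k^{-1}\|\nabla\hat{\mathcal{L}}_k(\vtheta_{k-1})\|$. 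Then I would split the regularized gradient: $\nabla\hat{\mathcal{L}}_k=\nabla\hat R_k+\lambda\nabla\hat\Delta_k$ with $\hat\Delta_k(\vtheta)=\sum_{\ell=1}^L(H_\ell(\vtheta)-H^{\ast}_{\ell,k})^2$, so $\nabla\hat\Delta_k(\vtheta)=2\sum_\ell(H_\ell(\vtheta)-H^{\ast}_{\ell,k})\nabla H_\ell(\vtheta)$. Assumption \ref{theoryA:entropy-lip} (differentiability plus $c_\ell$-Lipschitzness of $H_\ell$ along $\Gamma$) yields $\|\nabla H_\ell\|\le c_\ell$, and Cauchy--Schwarz over the layer index gives $\|\nabla\hat\Delta_k(\vtheta)\|\le2\big(\sum_\ell c_\ell^2\big)^{1/2}\big(\sum_\ell(H_\ell(\vtheta)-H^{\ast}_{\ell,k})^2\big)^{1/2}=2C_\Delta\sqrt{\hat\Delta_k(\vtheta)}$. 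By the triangle inequality, $\|\nabla\hat{\mathcal{L}}_k(\vtheta_{k-1})\|\le\|\nabla\hat R_k(\vtheta_{k-1})\|+2\lambda C_\Delta\sqrt{\hat\Delta_k(\vtheta_{k-1})}$.

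\textbf{Assembling.} Chaining the three displays gives $\mathcal{F}_{s\to t}\le L_s\sum_{k=s+1}^t\mu_k^{-1}\big(\|\nabla\hat R_k(\vtheta_{k-1})\|+2\lambda C_\Delta\sqrt{\hat\Delta_k(\vtheta_{k-1})}\big)$, which is \eqref{theoryeq:forgetting-final}. For the ``equivalently'' form, I would note the reverse split $\|\nabla\hat R_k\|\le\|\nabla\hat{\mathcal{L}}_k\|+\lambda\|\nabla\hat\Delta_k\|\le\|\nabla\hat{\mathcal{L}}_k\|+2\lambda C_\Delta\sqrt{\hat\Delta_k}$ and substitute, collapsing the two $\sqrt{\hat\Delta_k}$ contributions into the constant $4\lambda C_\Delta$.

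\textbf{Main obstacle.} The delicate point is the per-step drift bound: it presumes $\vtheta_k$ is an exact stationary point of the empirical objective $J_k$ (inexact optimization would introduce an additive slack $\|\nabla J_k(\vtheta_k)\|/\mu_k$), and it needs \ref{theoryA:local-strong} to hold on the segment joining $\vtheta_{k-1}$ and $\vtheta_k$ rather than globally—so I would be careful to state the monotonicity inequality exactly along that segment. I also need $H_\ell$ differentiable along $\Gamma$ so that Lipschitzness converts to the pointwise gradient bound $\|\nabla H_\ell\|\le c_\ell$; elsewhere the argument (telescoping, Cauchy--Schwarz, triangle inequalities) is routine.
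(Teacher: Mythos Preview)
Your proposal is correct and follows essentially the same route as the paper's proof: reduce forgetting to total parameter drift via the bounded-gradient (Lipschitz) property of $\mathcal{L}_s$ and telescoping, bound each per-task jump $\|\vtheta_k-\vtheta_{k-1}\|$ using $\nabla J_k(\vtheta_k)=0$ together with the strong-monotonicity assumption, and control $\|\nabla\hat\Delta_k\|$ by Cauchy--Schwarz over layers after converting the entropy Lipschitz constants to gradient bounds (the paper invokes Rademacher's theorem for this last step, matching the differentiability concern you flagged). Your observations about the need for exact stationarity of $\vtheta_k$ and for the monotonicity to hold along the segment are precisely the implicit assumptions the paper also makes.
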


Due to space limitations, we provide theorem proof in Appendix.
\section{Experiment}

\subsection{Experiment Setup}

\textbf{Datasets}: We evaluate our method on four benchmark datasets: CIFAR-10 , CIFAR-100, Tiny-ImageNet, and CUB200. Following standard class-incremental learning protocols, we partition each dataset into a sequence of distinct tasks. Specifically, CIFAR-10 is divided into 5 tasks of 2 classes each, CIFAR-100 is split into 10 tasks of 10 classes each, Tiny-ImageNet is partitioned into 10 tasks of 20 classes each, and CUB200 is split into 10 tasks of 20 classes each. 

\paragraph{Baselines}: We compare our method to strong baselines, including
AGEM \citep{AGEM}, ER \citep{chaudhry2019tiny}, MIR \citep{aljundi2019online}, GSS \citep{aljundi2019gradient}, ASER \citep{shim2021online}, ER-AML \citep{caccia2022new}, GDumb \citep{prabhu2020greedy}, SCR \citep{mai2021supervised}, OCM \citep{pmlr-v162-guo22g}, OnPro \citep{wei2023online}, GSA \citep{guo2023dealing}, DER++ \citep{buzzega2020dark},
IL2A \citep{zhu2021classincremental}, CO2L \citep{Cha_2021_ICCV}, LUCIR \citep{Hou_2019_CVPR}, CCIL \citep{Mittal_2021_CVPR}, BIC \citep{wu2019large},  SSIL \citep{Ahn_2021_ICCV}, and MOSE \citep{yan2024orchestrate}.

\paragraph{Implementation details}:
For our experiments, baseline results for all methods were adapted from \citep{yan2024orchestrate}. The only exception was the MOSE baseline itself, which we reproduced to ensure a fair comparison.  On the Tiny-ImageNet dataset, our reproduction using the unmodified official code yielded between 2 to 6\%  higher accuracy than the results reported in the original paper. We therefore average these baselines for all subsequent comparisons. All experiments were conducted on a single NVIDIA RTX 2080 Ti GPU with 12GB of VRAM, except for Tiny-ImageNet (with a buffer size of 10,000) required an NVIDIA RTX A4000 with 16GB of RAM. Each reported result is the mean and standard deviation computed over 10 independent runs. We use ResNet18 with a random initialization, and the model was trained in the online scenario for 1 epoch per task to make a fair comparison with the other methods. We use batch size 10 and buffer batch size 64. We use the Adam optimizer, with learning rate $10^{-3}$ and weight decay $10^{-4}$. We set the hyperparameter $\beta$ to 0.005. We regularize all 4 of the layers with each layer having its own classification head. We train the model from scratch and use cross-entropy loss. We evaluate forgetting with backward transfer (BWT).

\subsection{Results}
Our proposed method demonstrates significant improvements over existing state-of-the-art approaches in continual learning, as shown in Table \ref{tab:comprehensive_results} (overall accuracy) and Table \ref{tab:main_results_af_only} (forgetting). On the Split CIFAR-100 benchmark, our method outperforms the strongest baseline by up to 2.0\% in average accuracy, and the performance gains are even more pronounced on the more challenging Split Tiny-ImageNet dataset, where our method achieves up to 3.4\% higher average accuracy. When compared to the second-most accurate method, our approach reduces the forgetting metric by an average of 1.1\% on CIFAR-100 and 6.6\% on Tiny-ImageNet. While some methods focusing exclusively on mitigating forgetting may report lower forgetting values in isolation, they do so at a significant cost to overall accuracy, making our method the most effective and practical solution. We attribute these gains to our dynamic, layer-aware regularization strategy, which contrasts with the static approaches common in prior work. The improvement in overall accuracy is primarily driven by the entropy scaling component. By selectively penalizing over-confident layers, our method effectively mitigates overfitting, a conclusion supported by observing higher validation accuracy despite lower training accuracy compared to baselines. Concurrently, the reduction in catastrophic forgetting stems from the adaptive training mechanism. By constraining updates to well-performing layers, this component successfully preserves previously acquired knowledge. Furthermore, we validate the practical utility of our method in resource-constrained environments in Table \ref{tab:memory_efficiency}, demonstrating its strong performance in small-buffer scenarios characteristic of online learning. 

\begin{table*}[t]
    \centering
    \caption{Comprehensive comparison of continual learning methods on Split CIFAR-100 and Split Tiny-ImageNet under various memory constraints. All values are Accuracy (\%). M is the buffer size.}
    \label{tab:comprehensive_results}
    \resizebox{0.95\textwidth}{!}{%
        \begin{tabular}{@{}lcccccc@{}}
            \toprule
            \multirow{2}{*}{\textbf{Method}} & \multicolumn{3}{c}{\textbf{Split CIFAR-100 (10 tasks) - ACC(\%) $\uparrow$}} & \multicolumn{3}{c}{\textbf{Split Tiny-ImageNet (100 tasks) - ACC(\%) $\uparrow$}} \\
            \cmidrule(lr){2-4} \cmidrule(lr){5-7}
            & \textbf{M = 1k} & \textbf{M = 2k} & \textbf{M = 5k} & \textbf{M = 2k} & \textbf{M = 4k} & \textbf{M = 10k} \\
            \midrule
            AGEM (2019)     & 5.8$\pm$0.2  & 5.9$\pm$0.3  & 6.1$\pm$0.4  & 0.9$\pm$0.1  & 2.0$\pm$0.5  & 3.9$\pm$0.2  \\
            ER (2019)       & 15.7$\pm$0.3 & 21.3$\pm$0.5 & 28.8$\pm$0.8 & 4.7$\pm$0.5  & 10.1$\pm$0.7 & 11.7$\pm$0.2 \\
            MIR (2019)      & 16.0$\pm$0.4 & 19.0$\pm$0.1 & 24.1$\pm$0.2 & 6.1$\pm$0.5  & 11.7$\pm$0.2 & 13.5$\pm$0.2 \\
            GSS (2019)      & 11.1$\pm$0.2 & 13.3$\pm$0.5 & 17.4$\pm$0.1 & 3.3$\pm$0.5  & 10.0$\pm$0.2 & 10.5$\pm$0.2 \\
            ASER (2021)     & 16.4$\pm$0.3 & 12.2$\pm$1.9 & 27.1$\pm$0.3 & 5.3$\pm$0.3  & 8.2$\pm$0.2  & 10.3$\pm$0.4 \\
            ER-AML (2022)   & 16.1$\pm$0.4 & 17.6$\pm$0.5 & 22.6$\pm$0.1 & 5.4$\pm$0.2  & 7.1$\pm$0.5  & 10.1$\pm$0.4 \\
            GDumb (2020)    & 17.1$\pm$0.4 & 25.1$\pm$0.2 & 38.6$\pm$0.5 & 12.6$\pm$0.1 & 12.7$\pm$0.3 & 15.7$\pm$0.2 \\
            SCR (2021)      & 27.3$\pm$0.4 & 30.8$\pm$0.5 & 36.5$\pm$0.3 & 12.6$\pm$1.1 & 18.2$\pm$0.1 & 21.1$\pm$1.1 \\
            OCM (2022)      & 28.1$\pm$0.3 & 35.0$\pm$0.4 & 42.4$\pm$0.5 & 15.7$\pm$0.2 & 21.2$\pm$0.4 & 27.0$\pm$0.3 \\
            OnPro (2023)    & 30.0$\pm$0.4 & 35.9$\pm$0.6 & 41.3$\pm$0.5 & 16.9$\pm$0.4 & 22.1$\pm$0.4 & 29.8$\pm$0.5 \\
            GSA (2023)      & 31.4$\pm$0.2 & 39.7$\pm$0.6 & 49.7$\pm$0.2 & 18.4$\pm$0.4 & 26.0$\pm$0.2 & 33.2$\pm$0.4 \\
            DER++ (2020)    & 15.3$\pm$0.2 & 19.7$\pm$1.5 & 27.0$\pm$0.7 & 4.5$\pm$0.3  & 10.1$\pm$0.3 & 17.6$\pm$0.5 \\
            IL2A (2021)     & 18.2$\pm$1.2 & 19.7$\pm$0.5 & 22.4$\pm$0.2 & 5.5$\pm$0.7  & 8.1$\pm$1.2  & 11.6$\pm$0.4 \\
            Co2L (2021)     & 17.1$\pm$0.4 & 24.2$\pm$0.2 & 32.2$\pm$0.5 & 10.1$\pm$0.2 & 15.8$\pm$0.4 & 22.5$\pm$1.2 \\
            LUCIR (2019)    & 8.6$\pm$1.3  & 19.5$\pm$0.7 & 16.9$\pm$0.5 & 7.6$\pm$0.5  & 9.6$\pm$0.7  & 12.5$\pm$0.7 \\
            CCIL (2021)     & 18.5$\pm$0.3 & 19.1$\pm$0.4 & 20.5$\pm$0.3 & 5.6$\pm$0.9  & 7.0$\pm$0.5  & 15.2$\pm$0.5 \\
            BiC (2019)      & 21.2$\pm$0.3 & 36.1$\pm$1.3 & 42.5$\pm$1.2 & 10.2$\pm$0.9 & 18.9$\pm$0.3 & 25.2$\pm$0.6 \\
            SSIL (2021)     & 26.0$\pm$0.1 & 33.1$\pm$0.5 & 39.5$\pm$0.4 & 9.6$\pm$0.7  & 15.2$\pm$1.5 & 21.1$\pm$0.1 \\
            MOSE (2024)     & 37.4$\pm$0.3 & 47.0$\pm$0.4 & 55.6$\pm$0.4 & 24.7$\pm$0.5 & 32.4$\pm$0.3 & 40.6$\pm$0.5 \\
            \midrule
            \method (Ours)  & \textbf{39.4$\pm$0.4} & \textbf{47.6$\pm$0.1} & \textbf{56.3$\pm$0.1} & \textbf{28.1$\pm$0.2} & \textbf{34.8$\pm$0.2} & \textbf{41.4$\pm$0.3} \\
            \bottomrule
        \end{tabular}%
    }
\end{table*}

\begin{table*}[htbp]
\centering
\caption{Average Forgetting results (backward transfer) on Split CIFAR-100 and Split Tiny-ImageNet benchmarks.}
\label{tab:main_results_af_only}
\resizebox{0.97\textwidth}{!}{%
\begin{tabular}{l ccc ccc}
\toprule
\multirow{3}{*}{\textbf{Method}} & \multicolumn{3}{c}{\textbf{Split CIFAR-100 (10 tasks) - AF(\%) $\downarrow$}} & \multicolumn{3}{c}{\textbf{Split Tiny-ImageNet (100 tasks) - AF(\%) $\downarrow$}} \\
\cmidrule(lr){2-4} \cmidrule(lr){5-7}
& \textbf{M = 1k} & \textbf{M = 2k} & \textbf{M = 5k} & \textbf{M = 2k} & \textbf{M = 4k} & \textbf{M = 10k} \\
\midrule
AGEM (2019) & 77.6$\pm$2.0 & 76.9$\pm$1.5 & 78.3$\pm$1.2 & 73.9$\pm$0.2 & 77.9$\pm$0.2 & 74.1$\pm$0.3 \\
ER (2019) & 66.1$\pm$1.3 & 59.3$\pm$0.9 & 60.0$\pm$1.6 & 68.2$\pm$2.8 & 66.2$\pm$0.8 & 67.2$\pm$0.2 \\
MIR (2019) & 24.5$\pm$0.3 & 21.4$\pm$0.3 & 21.0$\pm$0.1 & 61.1$\pm$3.2 & 60.4$\pm$0.5 & 59.5$\pm$0.3 \\
GSS (2019) & 73.4$\pm$4.2 & 69.3$\pm$3.1 & 70.9$\pm$2.9 & 72.8$\pm$1.2 & 72.6$\pm$0.4 & 71.5$\pm$0.2 \\
ASER (2021) & 25.0$\pm$0.2 & 12.2$\pm$1.9 & 13.2$\pm$0.1 & 65.7$\pm$0.7 & 64.2$\pm$0.2 & 62.2$\pm$0.1 \\
ER-AML (2022) & 51.5$\pm$0.8 & 49.2$\pm$0.5 & 38.7$\pm$0.6 & 47.4$\pm$0.5 & 43.2$\pm$0.3 & 41.0$\pm$0.5 \\
GDumb (2020) & 16.7$\pm$0.5 & 17.6$\pm$0.2 & 16.8$\pm$0.4 & 15.9$\pm$0.5 & 14.6$\pm$0.3 & 11.7$\pm$0.2 \\
SCR (2021) & 17.5$\pm$0.2 & 11.6$\pm$0.5 & 5.6$\pm$0.4 & 19.4$\pm$0.3 & 15.4$\pm$0.3 & 14.9$\pm$0.7 \\
OCM (2022) & 12.2$\pm$0.3 & 8.5$\pm$0.3 & 4.5$\pm$0.3 & 23.5$\pm$1.9 & 21.0$\pm$0.3 & 18.6$\pm$0.5 \\
OnPro (2023) & 10.4$\pm$0.5 & 6.1$\pm$0.6 & 5.3$\pm$0.6 & 17.4$\pm$0.4 & 16.8$\pm$0.4 & 14.6$\pm$0.3 \\
GSA (2023) & 33.2$\pm$0.6 & 22.8$\pm$0.4 & 8.7$\pm$0.3 & 35.5$\pm$0.3 & 25.8$\pm$0.4 & 16.9$\pm$0.6 \\
DER++ (2020) & 43.4$\pm$0.2 & 44.0$\pm$1.9 & 25.8$\pm$3.5 & 67.2$\pm$1.7 & 63.6$\pm$0.3 & 55.2$\pm$0.7 \\
IL2A (2021) & 24.6$\pm$0.6 & 12.5$\pm$0.7 & 20.0$\pm$0.5 & 65.5$\pm$0.7 & 60.1$\pm$0.5 & 57.6$\pm$1.1 \\
Co2L (2021) & 16.9$\pm$0.4 & 16.6$\pm$0.6 & 9.9$\pm$0.7 & 60.5$\pm$0.5 & 52.5$\pm$0.9 & 42.5$\pm$0.8 \\
LUCIR (2019) & 60.0$\pm$0.1 & 47.5$\pm$0.9 & 44.3$\pm$0.7 & 46.4$\pm$0.7 & 42.2$\pm$0.9 & 37.6$\pm$0.7 \\
CCIL (2021) & 16.7$\pm$0.5 & 16.1$\pm$0.3 & 17.5$\pm$0.2 & 59.4$\pm$0.3 & 56.2$\pm$1.3 & 48.9$\pm$0.6 \\
BiC (2019) & 40.2$\pm$0.4 & 30.9$\pm$0.7 & 18.7$\pm$0.5 & 43.5$\pm$0.5 & 32.9$\pm$0.5 & 24.9$\pm$0.4 \\
SSIL (2021) & 40.1$\pm$0.5 & 33.9$\pm$1.2 & 21.7$\pm$0.8 & 44.4$\pm$0.7 & 36.6$\pm$0.7 & 29.0$\pm$0.7 \\
MOSE (2024)& 34.7$\pm$0.3 & 23.6$\pm$0.4 & 12.7$\pm$0.4 & 33.3$\pm$0.5& 22.1$\pm$0.4& 11.5$\pm$0.4\\
\midrule
\method (Ours)& 33.9$\pm$0.3 & 22.1$\pm$0.4 & 11.6$\pm$0.5 & 22.7$\pm$1.0 & 15.3$\pm$0.8 & 8.95$\pm$0.3 \\
\bottomrule
\end{tabular}
}
\end{table*}

\subsection{Ablation study}

In this section, we perform comprehensive ablation study and hyperparameter analysis to evaluate the effectiveness of the proposed method. We perform ablation study for the components in GRACE in Table \ref{tab:ablation_study}.

\begin{table*}[!htbp]
    \centering
    \begin{minipage}[t]{0.5\textwidth}
        \centering
        \scriptsize

        \caption{Comparison of memory efficiency on Split CIFAR-100 and Split Tiny-ImageNet.}
        \label{tab:memory_efficiency}
        \setlength{\tabcolsep}{3pt} 

        \begin{tabular}{@{}lcccc@{}}
            \toprule
            \multirow{2}{*}{\textbf{Method}} & \multicolumn{2}{c}{\textbf{Split CIFAR-100}} & \multicolumn{2}{c}{\textbf{Split Tiny-ImageNet}} \\
            \cmidrule(lr){2-3} \cmidrule(lr){4-5}
            & \textbf{M=200} & \textbf{M=500} & \textbf{M=500} & \textbf{M=1K} \\
            \midrule
            OCM (2022)      & 12.2$\pm$0.4 & 19.7$\pm$0.5 & 7.3$\pm$0.5  & 10.5$\pm$0.6 \\
            OnPro (2023)    & 14.1$\pm$0.9 & 21.5$\pm$1.4 & 7.2$\pm$0.4  & 10.2$\pm$0.3 \\
            GSA (2023)      & 14.9$\pm$0.3 & 22.9$\pm$0.2 & 10.4$\pm$0.3 & 14.8$\pm$0.2 \\
            MOSE (2024)     & 20.2$\pm$0.5 & 28.3$\pm$0.7 & 15.2$\pm$0.7 & 20.2$\pm$0.9 \\
            \midrule
            \method{} (Ours)& \textbf{21.5$\pm$0.5} & \textbf{29.8$\pm$0.6} & \textbf{16.6$\pm$0.6} & \textbf{21.7$\pm$0.8} \\
            \bottomrule
        \end{tabular}
    \end{minipage}%
    \hfill 
    \begin{minipage}[t]{0.5\textwidth}
        \centering
        
        \caption{Ablation study on Split-CIFAR-100 (with a buffer size of 1,000), showing accuracy drop when removing components.}
        \label{tab:ablation_study}
        \setlength{\tabcolsep}{3pt}

        \begin{tabular}{lc}
            \toprule
            \textbf{Method / Variation} & \textbf{Accuracy (\%)} \\
            \midrule
            \textbf{Main Model (Full)} & \textbf{39.4} \\
            \midrule
            \quad w/o Entropy Scaling   & 37.2 \\
            \quad w/o Adaptive Training & 38.9 \\
            \bottomrule
        \end{tabular}
    \end{minipage}

\end{table*}

\textbf{w/o Entropy Scaling:} Removing the entropy scaling mechanism causes the most significant performance degradation, with accuracy falling to 37.2\%. This confirms that adaptive scaling is the core contribution of our method. The performance suffers because, without scaling, any entropy regularization is applied uniformly. This is detrimental because early layers in a network are responsible for learning general, low-level features (e.g., edges, textures) that are common across many classes.

\textbf{w/o Adaptive Training:} Removing the adaptive training component resulted in a modest drop in accuracy from 39.4\% to 38.9\%. This is expected, as this mechanism is designed to intelligently manage the stability-plasticity trade-off. By adaptively reducing the regularization strength for layers that have already learned robust features for past tasks, it preserves critical knowledge. Removing this targeted intervention leads to slightly increased forgetting and a predictable drop in accuracy.

\textbf{Results with Vision Transformer Backbone}: We evaluate the effectiveness of our approach with vision transformer (ViT) \cite{dosovitskiy2021an} on attention layers. The results are presented in Appendix.

\includegraphics[width=\columnwidth]{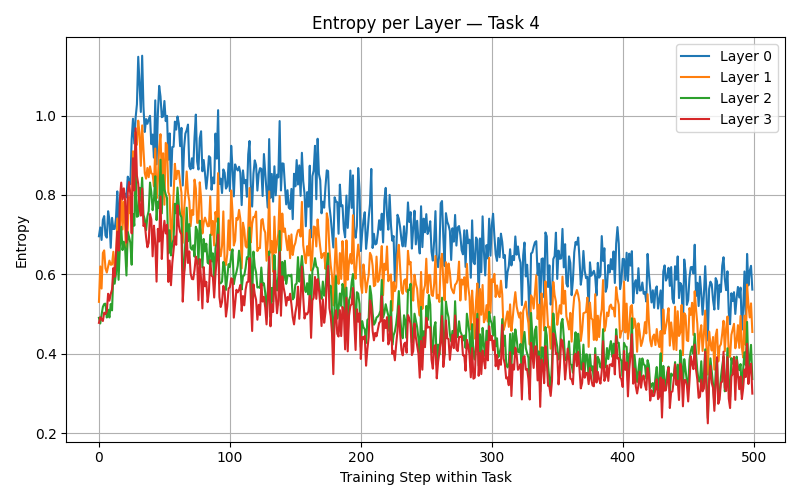}
\begin{figure}[htbp]
  \centering
  \caption{In a baseline model without our intervention, a significant entropy divergence emerges during training. Earlier layers consistently maintain high entropy, while deeper layers collapse to a low-entropy state, suggesting over-confidence. }
  \label{fig:left_image}
\end{figure}

\textbf{Results on CUB200 and CIFAR-10} The CUB200 and CIFAR-10 results are placed in the Appendix.

\includegraphics[width=\columnwidth]{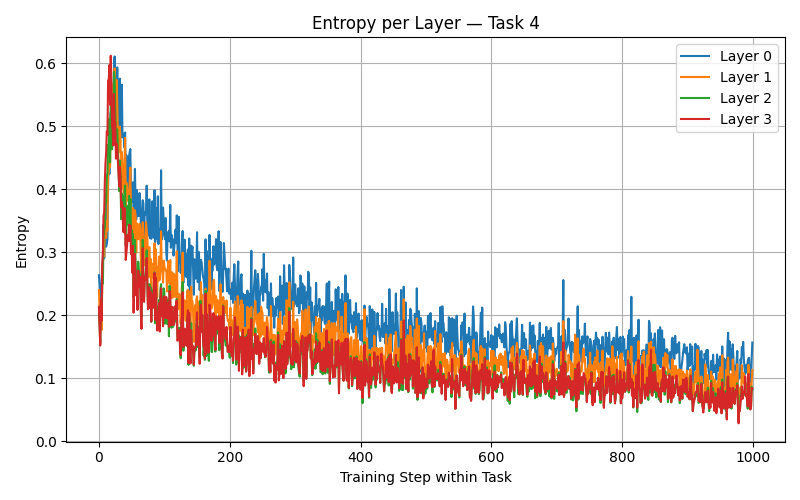}
\begin{figure}[htbp]
  \centering
  \caption{With entropy scaling, the entropies across all layers are successfully regularized. They converge towards a stable, medium-entropy state, showing that our method prevents individual layers from becoming either over-confident or under-confident. }
  \label{fig:right_image}
\end{figure}
 
\textbf{Hyperparameter Analysis (batch size, etc):} We do not add new hyperparameters, but the $\beta$ hyperparameter used in \citep{cha2021cpr} is changed from 0.05 to 0.005. We also test how different batch sizes affects our method. We find that the default batch size of 10 works well, while decreasing it to 5 or less will introduce too much noise depending on the specific samples that appear in a batch. Higher batch sizes of 20 or more maintain similar performance, as expected. Detailed results are in the Appendix. 

\textbf{Runtime Analysis:} 
Our method has a small computational overhead. Our method builds on top of MOSE, which takes twice as long as simpler methods such as SCR and ER, as each layer is evaluated on previous tasks. Hence, our method has a similar runtime. Compared to MOSE, the main overhead that is added is the computation and storage of the entropies of all the layers. On an average of ten runs on CIFAR-100 with buffer size 1000, the runtime increases from 20 minutes to 21 minutes. Detailed runtime metrics are included in the Appendix.

\section{Conclusion}

In this work, we introduced a novel, layer-wise feedback framework to mitigate catastrophic forgetting in continual learning. Our core method dynamically regularizes each layer by applying a penalty inversely proportional to its output entropy. Our approach is principled and modular and can be readily integrated into existing continual learning pipelines. We provide comprehensive theoretical analysis to show the benefits of our proposed approach. We have demonstrated its effectiveness through significant performance gains on various datasets.

{
    \small
    \bibliographystyle{ieeenat_fullname}
    \bibliography{main}
}

\appendix
\newpage
\section{Appendix}
\begin{theorem}[PAC-Bayes generalization with entropy control]
\label{thm:pacbayes-entropy}
Fix $\delta\in(0,1)$. For task $t$ with sample size $n_t\ge 2$, let
\begin{equation}
\begin{split}
    \mathcal{L}_t(\vtheta) &= R_t(\vtheta) + \lambda\,\Delta_t(\vtheta), \\
    R_t(\vtheta) &= \mathbb{E}_{(x,y)\sim\mathcal{D}_t}\!\big[\ell_t(f_\vtheta(x),y)\big]
\end{split}
\end{equation}

where $\ell_t\in[0,1]$ and $\Delta_t(\vtheta)=\sum_{\ell=1}^L \big(H_l(\vtheta)) - H_{\ell,t}^{*}\big)^2$.

Let $\hat{\mathcal{L}}_t(\vtheta)=\hat{R}_t(\vtheta)+\lambda\,\hat{\Delta}_t(\vtheta)$ be the empirical analogue on $n_t$ samples.
For any posterior $q_t$ absolutely continuous w.r.t.\ a prior $p_t$ (chosen before seeing the task-$t$ data), with probability at least $1-\delta$ over the sample,
\begin{equation}
\begin{split}
    \mathbb{E}_{\vtheta\sim q_t}\big[\mathcal{L}_t(\vtheta)\big]
    &\le
    \mathbb{E}_{\vtheta\sim q_t}\big[\hat{\mathcal{L}}_t(\vtheta)\big] \\
    &\quad +
    \sqrt{\frac{\mathrm{KL}(q_t\Vert p_t)+\ln\frac{2\sqrt{n_t}}{\delta}}{2(n_t-1)}} \\
    &\quad +
    \lambda\,\mathbb{E}_{\vtheta\sim q_t}\big[\Delta_t(\vtheta)\big].
\end{split}
\label{eq:main-pac}
\end{equation}

Moreover, if $p_t=\mathcal{N}(\vtheta_{t-1},\Sigma_p)$ and $q_t$ has mean $\bar\vtheta_t$ and covariance $\Sigma_q$,
\begin{equation}
\begin{split}
    \mathrm{KL}(q_t\Vert p_t)
    &\le
    \frac{\kappa_t}{2}\,\|\bar\vtheta_t-\vtheta_{t-1}\|^2
    \;+\;C_t, \\
    \text{where} \quad \kappa_t &:= \lambda_{\max}(\Sigma_p^{-1}), \\
    C_t &:= \tfrac{1}{2}\!\left(\mathrm{tr}(\Sigma_p^{-1}\Sigma_q)-k+\ln\frac{\det\Sigma_p}{\det\Sigma_q}\right).
\end{split}
\end{equation}
\end{theorem}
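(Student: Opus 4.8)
The plan is to split the statement into two essentially independent pieces: the PAC-Bayes inequality \eqref{eq:main-pac}, obtained from a standard bounded-loss PAC-Bayes bound applied only to the risk term $R_t$ together with a one-line nonnegativity argument for the entropy-deviation term; and the Gaussian KL refinement, which is just the closed-form Gaussian KL identity followed by an eigenvalue bound on a quadratic form.

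First I would invoke the classical McAllester/Maurer PAC-Bayes inequality. Since $\ell_t\in[0,1]$, for each fixed $\vtheta$ the empirical risk $\hat R_t(\vtheta)$ is an average of $n_t$ i.i.d.\ $[0,1]$-valued terms, and the prior $p_t=\mathcal{N}(\vtheta_{t-1},\Sigma_p)$ is data-independent (it is frozen once training on tasks $1,\dots,t-1$ has finished, before the task-$t$ sample is seen). Hence, with probability at least $1-\delta$, simultaneously over all $q_t\ll p_t$,
\[
\mathbb{E}_{\vtheta\sim q_t}[R_t(\vtheta)]\;\le\;\mathbb{E}_{\vtheta\sim q_t}[\hat R_t(\vtheta)]+\sqrt{\frac{\mathrm{KL}(q_t\Vert p_t)+\ln\frac{2\sqrt{n_t}}{\delta}}{2(n_t-1)}}.
\]
Then I would write $\mathbb{E}_{q_t}[\mathcal{L}_t]-\mathbb{E}_{q_t}[\hat{\mathcal{L}}_t]=\big(\mathbb{E}_{q_t}[R_t]-\mathbb{E}_{q_t}[\hat R_t]\big)+\lambda\big(\mathbb{E}_{q_t}[\Delta_t]-\mathbb{E}_{q_t}[\hat\Delta_t]\big)$, and since $\hat\Delta_t(\vtheta)=\sum_{\ell}(\hat H_\ell(\vtheta)-H^{*}_{\ell,t})^2\ge 0$ pointwise, bound $\mathbb{E}_{q_t}[\Delta_t]-\mathbb{E}_{q_t}[\hat\Delta_t]\le\mathbb{E}_{q_t}[\Delta_t]$. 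Substituting the PAC-Bayes bound for the risk gap yields \eqref{eq:main-pac}; this is also where the entropy control matters, since the bound is only as tight as $\lambda\,\mathbb{E}_{q_t}[\Delta_t]$ is small.

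For the Gaussian part I would substitute $p_t=\mathcal{N}(\vtheta_{t-1},\Sigma_p)$ and $q_t=\mathcal{N}(\bar\vtheta_t,\Sigma_q)$ into the standard identity
\[
\mathrm{KL}\!\big(\mathcal{N}(\mu_q,\Sigma_q)\Vert\mathcal{N}(\mu_p,\Sigma_p)\big)=\tfrac12\Big[\mathrm{tr}(\Sigma_p^{-1}\Sigma_q)-k+(\mu_q-\mu_p)^\top\Sigma_p^{-1}(\mu_q-\mu_p)+\ln\tfrac{\det\Sigma_p}{\det\Sigma_q}\Big],
\]
and bound $(\bar\vtheta_t-\vtheta_{t-1})^\top\Sigma_p^{-1}(\bar\vtheta_t-\vtheta_{t-1})\le\lambda_{\max}(\Sigma_p^{-1})\,\|\bar\vtheta_t-\vtheta_{t-1}\|^2=\kappa_t\,\|\bar\vtheta_t-\vtheta_{t-1}\|^2$, collecting the remaining data-independent terms into $C_t$.

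I expect the main obstacle to be bookkeeping rather than genuine difficulty: pinning down the exact PAC-Bayes form being cited (the $\ln(2\sqrt{n_t}/\delta)$ numerator and $n_t-1$ denominator correspond to a specific known bound, e.g.\ Maurer's inequality combined with Pinsker), and being explicit that $p_t$ is an admissible prior. A minor modelling subtlety is what ``empirical'' $\hat\Delta_t$ means when the layer entropies $H_\ell$ may themselves be population objects; the nonnegativity step is deliberately chosen to sidestep this, so no concentration argument for the entropies is needed.
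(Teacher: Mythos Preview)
Your proposal is correct and essentially identical to the paper's proof: both apply the Maurer/McAllester PAC-Bayes bound to the bounded risk $R_t$ alone, then use $\hat\Delta_t\ge 0$ to absorb the entropy term (the paper phrases this as $\hat R_t\le\hat{\mathcal{L}}_t$, you as $\Delta_t-\hat\Delta_t\le\Delta_t$, which is the same inequality rearranged), and both handle the Gaussian KL via the closed-form identity plus the eigenvalue bound on the quadratic form. Your remarks on the specific Maurer constants and on the ``empirical $\hat\Delta_t$'' subtlety are apt and match the paper's own parenthetical comment that this step deliberately avoids a concentration argument for $\Delta_t-\hat\Delta_t$.
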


\begin{proof}[Proof of Theorem~\ref{thm:pacbayes-entropy}]

By a standard PAC-Bayes inequality for $[0,1]$-bounded losses (e.g., Seeger/McAllester form), with probability at least $1-\delta$ over the sample of size $n_t\ge 2$, for any posterior $q_t\ll p_t$,
\[
\mathbb{E}_{\vtheta\sim q_t}\!\big[R_t(\vtheta)\big]
\;\le\;
\mathbb{E}_{\vtheta\sim q_t}\!\big[\hat{R}_t(\vtheta)\big]
\;+\;
\sqrt{\frac{\mathrm{KL}(q_t\Vert p_t)+\ln\frac{2\sqrt{n_t}}{\delta}}{2(n_t-1)}}.
\tag{1}
\label{eq:pac-task}
\]
A succinct derivation is as follows. Let $r_\vtheta=\hat{R}_t(\vtheta)$ and $R_\vtheta=R_t(\vtheta)$. The “shift-of-measure” (Donsker–Varadhan) inequality implies that for any measurable $\phi$,
\[
\mathbb{E}_{q_t}[\phi(\vtheta)]
\;\le\;
\mathrm{KL}(q_t\Vert p_t)+\ln\mathbb{E}_{p_t}\!\big[e^{\phi(\vtheta)}\big].
\]
Apply this with $\phi(\vtheta)=\lambda\big(R_\vtheta-r_\vtheta\big)$ and bound the moment-generating function uniformly over $\vtheta$ by Hoeffding’s lemma for $[0,1]$-bounded losses (together with a union/intersection trick that yields the $n_t\!-\!1$ and the $\ln(2\sqrt{n_t}/\delta)$ refinements), then optimize over $\lambda>0$ to obtain~\eqref{eq:pac-task}.

\smallskip

By definition,
\[
\mathbb{E}_{q_t}\!\big[\mathcal{L}_t(\vtheta)\big]
\;=\;
\mathbb{E}_{q_t}\!\big[R_t(\vtheta)\big]
\;+\;
\lambda\,\mathbb{E}_{q_t}\!\big[\Delta_t(\vtheta)\big].
\]
Combining with~\eqref{eq:pac-task} gives
\begin{equation}
\begin{split}
    \mathbb{E}_{q_t}\!\big[\mathcal{L}_t(\vtheta)\big]
    &\le
    \mathbb{E}_{q_t}\!\big[\hat{R}_t(\vtheta)\big] \\
    &\quad +
    \sqrt{\frac{\mathrm{KL}(q_t\Vert p_t)+\ln\frac{2\sqrt{n_t}}{\delta}}{2(n_t-1)}} \\
    &\quad +
    \lambda\,\mathbb{E}_{q_t}\!\big[\Delta_t(\vtheta)\big].
\end{split}
\tag{2}
\label{eq:add-entropy}
\end{equation}
Now note $\hat{\mathcal{L}}_t(\vtheta)=\hat{R}_t(\vtheta)+\lambda\,\hat{\Delta}_t(\vtheta)\ge \hat{R}_t(\vtheta)$ since $\hat{\Delta}_t(\vtheta)\ge 0$. Hence
\[
\mathbb{E}_{q_t}\!\big[\hat{R}_t(\vtheta)\big]
\;\le\;
\mathbb{E}_{q_t}\!\big[\hat{\mathcal{L}}_t(\vtheta)\big],
\]
and substituting this into~\eqref{eq:add-entropy} yields the claimed bound~\eqref{eq:main-pac}. (This step deliberately avoids an empirical-process bound for $\Delta_t-\hat{\Delta}_t$; adding such a bound would replace the last $+\lambda\mathbb{E}_{q_t}[\Delta_t]$ term by $+\lambda\mathbb{E}_{q_t}[\hat{\Delta}_t]+$ a vanishing $O_\mathbb{P}(1/\sqrt{n_t})$ term.)

\smallskip

If $p_t=\mathcal{N}(\vtheta_{t-1},\Sigma_p)$ and $q_t$ has mean $\bar\vtheta_t$ and covariance $\Sigma_q$, the Gaussian KL identity gives
\begin{equation}
\begin{split}
    \mathrm{KL}(q_t\Vert p_t)
    = \tfrac12\bigg(
    &\mathrm{tr}(\Sigma_p^{-1}\Sigma_q) \\
    &+(\bar\vtheta_t-\vtheta_{t-1})^\top \Sigma_p^{-1}(\bar\vtheta_t-\vtheta_{t-1}) \\
    &-k
    +\ln\frac{\det\Sigma_p}{\det\Sigma_q}
    \bigg).
\end{split}
\end{equation}
Using $(\bar\vtheta_t-\vtheta_{t-1})^\top \Sigma_p^{-1}(\bar\vtheta_t-\vtheta_{t-1})
\le \lambda_{\max}(\Sigma_p^{-1})\,\|\bar\vtheta_t-\vtheta_{t-1}\|^2$ gives the stated bound with $\kappa_t=\lambda_{\max}(\Sigma_p^{-1})$ and the remaining terms absorbed into $C_t$.
\end{proof}

\begin{theorem}[Forgetting bound via parameter drift]
\label{thm:forgetting-drift}
Let $\mathcal{F}_{s\to t}:=\mathcal{L}_s(\vtheta_t)-\mathcal{L}_s(\vtheta_s)$ for $1\le s<t\le T$. 
Assume:

\begin{enumerate}[label=(A\arabic*)]
\item \label{A:boundedloss} $\ell_s\in[0,1]$ and along the optimization trajectory the population objective $\mathcal{L}_s$ has bounded gradient: $\sup_{\vtheta\in\Gamma}\|\nabla \mathcal{L}_s(\vtheta)\|\le L_s$, where $\Gamma$ contains $\{\vtheta_k\}_{k=s}^t$ and the line segments between successive iterates.
\item \label{A:entropy-lip} (Entropy Lipschitzness) For each layer $\ell$, $H_l(\vtheta)$ is Lipschitz in $\vtheta$ with constant $c_\ell$ along $\Gamma$.
\item \label{A:local-strong} (Local strong convexity/PL for task $k$) The empirical objective $J_k(\vtheta):=\hat{\mathcal{L}}_k(\vtheta)=\hat{R}_k(\vtheta)+\lambda\,\hat{\Delta}_k(\vtheta)$ is $\mu_k$-strongly convex on the segment between $\vtheta_{k-1}$ and $\vtheta_k$, i.e.,
\[
\big\langle \nabla J_k(\vtheta)-\nabla J_k(\vtheta'),\,\vtheta-\vtheta'\big\rangle \;\ge\; \mu_k\,\|\vtheta-\vtheta'\|^2.
\]
\end{enumerate}
Then
\begin{equation}
\begin{split}
    \mathcal{F}_{s\to t}
    &\le
    L_s\,\sum_{k=s+1}^{t}
    \frac{1}{\mu_k}\Big(
    \big\|\nabla \hat{R}_k(\vtheta_{k-1})\big\| \\
    &\qquad\quad +
    2\lambda\,C_\Delta\,\sqrt{\hat{\Delta}_k(\vtheta_{k-1})}
    \Big), \\
    \text{where} \quad C_\Delta &:= \Big(\sum_{\ell=1}^L c_\ell^2\Big)^{1/2}.
\end{split}
\label{eq:forgetting-final}
\end{equation}
Equivalently, since $\|\nabla \hat{R}_k\|\le \|\nabla \hat{\mathcal{L}}_k\|+2\lambda C_\Delta\sqrt{\hat{\Delta}_k}$,
\[
\mathcal{F}_{s\to t}
\;\le\;
L_s\,\sum_{k=s+1}^{t}
\frac{1}{\mu_k}\Big(
\big\|\nabla \hat{\mathcal{L}}_k(\vtheta_{k-1})\big\|
\;+\;
4\lambda\,C_\Delta\,\sqrt{\hat{\Delta}_k(\vtheta_{k-1})}
\Big).
\]
\end{theorem}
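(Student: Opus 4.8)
The plan is to convert the functional forgetting gap $\mathcal{F}_{s\to t}$ into a bound on the parameter drift $\|\vtheta_t-\vtheta_s\|$ and then control that drift one task at a time. First I would telescope $\mathcal{L}_s(\vtheta_t)-\mathcal{L}_s(\vtheta_s)=\sum_{k=s+1}^{t}\big(\mathcal{L}_s(\vtheta_k)-\mathcal{L}_s(\vtheta_{k-1})\big)$, and on each segment invoke the mean-value inequality: since the line segment joining $\vtheta_{k-1}$ and $\vtheta_k$ lies in $\Gamma$, assumption (A1) gives $\mathcal{L}_s(\vtheta_k)-\mathcal{L}_s(\vtheta_{k-1})=\int_0^1\big\langle \nabla\mathcal{L}_s(\vtheta_{k-1}+u(\vtheta_k-\vtheta_{k-1})),\,\vtheta_k-\vtheta_{k-1}\big\rangle\,du\le L_s\,\|\vtheta_k-\vtheta_{k-1}\|$ by Cauchy--Schwarz, so $\mathcal{F}_{s\to t}\le L_s\sum_{k=s+1}^{t}\|\vtheta_k-\vtheta_{k-1}\|$.

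Next I would bound each per-task drift $\|\vtheta_k-\vtheta_{k-1}\|$ using local strong convexity. Since $\vtheta_k$ minimizes the empirical objective $J_k$ we have $\nabla J_k(\vtheta_k)=0$, and assumption (A3) then gives $\mu_k\|\vtheta_k-\vtheta_{k-1}\|^2\le\big\langle \nabla J_k(\vtheta_{k-1})-\nabla J_k(\vtheta_k),\,\vtheta_{k-1}-\vtheta_k\big\rangle=\big\langle \nabla J_k(\vtheta_{k-1}),\,\vtheta_{k-1}-\vtheta_k\big\rangle\le\|\nabla J_k(\vtheta_{k-1})\|\,\|\vtheta_k-\vtheta_{k-1}\|$, hence $\|\vtheta_k-\vtheta_{k-1}\|\le\mu_k^{-1}\|\nabla J_k(\vtheta_{k-1})\|$.

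The third step is to split $\nabla J_k=\nabla\hat{R}_k+\lambda\nabla\hat{\Delta}_k$ and bound the regularizer gradient. Differentiating $\hat{\Delta}_k(\vtheta)=\sum_{\ell}(H_l(\vtheta)-H^{*}_{\ell,k})^2$ yields $\nabla\hat{\Delta}_k(\vtheta)=2\sum_{\ell}(H_l(\vtheta)-H^{*}_{\ell,k})\nabla H_l(\vtheta)$; assumption (A2) gives $\|\nabla H_l(\vtheta)\|\le c_\ell$, so Cauchy--Schwarz over the layer index produces $\|\nabla\hat{\Delta}_k(\vtheta)\|\le 2\big(\sum_{\ell}c_\ell^2\big)^{1/2}\big(\sum_{\ell}(H_l(\vtheta)-H^{*}_{\ell,k})^2\big)^{1/2}=2C_\Delta\sqrt{\hat{\Delta}_k(\vtheta)}$. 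Combining with the triangle inequality gives $\|\nabla J_k(\vtheta_{k-1})\|\le\|\nabla\hat{R}_k(\vtheta_{k-1})\|+2\lambda C_\Delta\sqrt{\hat{\Delta}_k(\vtheta_{k-1})}$; feeding this back through Steps~1 and 2 yields exactly~\eqref{theoryeq:forgetting-final}, and the equivalent form follows from $\|\nabla\hat{R}_k\|\le\|\nabla\hat{\mathcal{L}}_k\|+\lambda\|\nabla\hat{\Delta}_k\|\le\|\nabla\hat{\mathcal{L}}_k\|+2\lambda C_\Delta\sqrt{\hat{\Delta}_k}$.

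I expect the main obstacle to be the use of $\nabla J_k(\vtheta_k)=0$ in the second step: this is the idealization that each task's empirical objective is driven to a stationary point, and it is what makes strong convexity bite. If $\vtheta_k$ is only an approximate minimizer, the same chain of inequalities goes through with an extra additive $\mu_k^{-1}\|\nabla J_k(\vtheta_k)\|$ per step, which I would state explicitly. A secondary, more routine point is justifying $\|\nabla H_l\|\le c_\ell$ from the Lipschitz hypothesis (A2): this holds wherever $H_l$ is differentiable, hence almost everywhere on the segments in $\Gamma$, which is all that is needed inside the integral and at the iterates.
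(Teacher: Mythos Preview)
Your proposal is correct and follows essentially the same three-step argument as the paper: bound $\mathcal{F}_{s\to t}$ by $L_s\sum_k\|\vtheta_k-\vtheta_{k-1}\|$, use strong convexity together with $\nabla J_k(\vtheta_k)=0$ to get $\|\vtheta_k-\vtheta_{k-1}\|\le\mu_k^{-1}\|\nabla J_k(\vtheta_{k-1})\|$, and then bound $\|\nabla\hat{\Delta}_k\|$ via the chain rule, the Lipschitz-to-gradient bound (Rademacher), and Cauchy--Schwarz over layers. The only cosmetic difference is in Step~1: the paper integrates along the single segment from $\vtheta_s$ to $\vtheta_t$ and then applies the triangle inequality to $\|\vtheta_t-\vtheta_s\|$, whereas you telescope first and integrate along each consecutive segment---your version is arguably cleaner since $\Gamma$ is only assumed to contain segments between \emph{successive} iterates.
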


\begin{proof}[Proof of Theorem~\ref{thm:forgetting-drift}]
\textbf{ Reduce forgetting to parameter displacement.}
By the fundamental theorem of calculus along the line segment from $\vtheta_s$ to $\vtheta_t$ and the bounded-gradient assumption~\ref{A:boundedloss},
\begin{equation}
\begin{split}
    \mathcal{F}_{s\to t}
    &=
    \int_0^1
    \left\langle \nabla \mathcal{L}_s\big(\vtheta_s+\tau(\vtheta_t-\vtheta_s)\big),\,\vtheta_t-\vtheta_s\right\rangle d\tau \\
    &\le
    \Big(\sup_{\vtheta\in\Gamma}\|\nabla \mathcal{L}_s(\vtheta)\|\Big)\,\|\vtheta_t-\vtheta_s\| \\
    &\le
    L_s\,\|\vtheta_t-\vtheta_s\|.
\end{split}
\end{equation}
By the triangle inequality,
\[
\|\vtheta_t-\vtheta_s\|
\;\le\;
\sum_{k=s+1}^{t}\|\vtheta_k-\vtheta_{k-1}\|.
\]
Hence
\begin{equation}
\label{eq:forgetting-step1}
\mathcal{F}_{s\to t}
\;\le\;
L_s\,\sum_{k=s+1}^{t}\|\vtheta_k-\vtheta_{k-1}\|.
\end{equation}

\smallskip
\textbf{Bound each inter-task jump by the local geometry of $J_k$.}
Since $\vtheta_k$ is a (local) minimizer or a first-order stationary point of $J_k$ on task $k$, $\nabla J_k(\vtheta_k)=0$.
By strong convexity/strong monotonicity along the segment (Assumption~\ref{A:local-strong}) and Cauchy–Schwarz,
\begin{equation}
\begin{split}
    \mu_k\,\|\vtheta_k-\vtheta_{k-1}\|
    &\le
    \big\|\nabla J_k(\vtheta_{k-1})-\nabla J_k(\vtheta_k)\big\| \\
    &=
    \big\|\nabla J_k(\vtheta_{k-1})\big\|.
\end{split}
\end{equation}
Therefore
\begin{equation}
\begin{split}
\label{eq:jump-vs-grad}
    \|\vtheta_k-\vtheta_{k-1}\|
    &\le
    \frac{1}{\mu_k}\,\big\|\nabla J_k(\vtheta_{k-1})\big\| \\
    &=
    \frac{1}{\mu_k}\,\big\|\nabla \hat{R}_k(\vtheta_{k-1})+\lambda\,\nabla \hat{\Delta}_k(\vtheta_{k-1})\big\|.
\end{split}
\end{equation}

\smallskip
\textbf{Control the entropy-gradient by entropy deviation.}
Write $\hat{\Delta}_k(\vtheta)=\sum_{\ell=1}^L d_{\ell,k}(\vtheta)^2$ 
with $d_{\ell,k}(\vtheta)=H_l(\vtheta)) - H_{\ell,k}^{*}$.

By the chain rule,
\[
\nabla \hat{\Delta}_k(\vtheta)
=
2\sum_{\ell=1}^L d_{\ell,k}(\vtheta)\,\nabla H_l(\vtheta)).
\]
By Assumption~\ref{A:entropy-lip} and Rademacher’s theorem (Lipschitz $\Rightarrow$ a.e.\ differentiable with gradient norm bounded by the Lipschitz constant) we have $\|\nabla H_l(\vtheta)\|\le c_\ell$ along $\Gamma$.
Thus, by Cauchy–Schwarz,

\begin{equation}
\begin{split}
\label{eq:grad-delta-bound}
    \|\nabla \hat{\Delta}_k(\vtheta)\|
    &\le
    2\Big(\sum_{\ell=1}^L d_{\ell,k}(\vtheta)^2\Big)^{1/2} 
     \cdot \Big(\sum_{\ell=1}^L c_\ell^2\Big)^{1/2} \\
    &=
    2\,C_\Delta\,\sqrt{\hat{\Delta}_k(\vtheta)}.
\end{split}
\end{equation}

Evaluating at $\vtheta=\vtheta_{k-1}$ yields
\[
\big\|\nabla J_k(\vtheta_{k-1})\big\|
\;\le\;
\big\|\nabla \hat{R}_k(\vtheta_{k-1})\big\|
\;+\;
2\lambda\,C_\Delta\,\sqrt{\hat{\Delta}_k(\vtheta_{k-1})}.
\]
Combine this estimate with~\eqref{eq:jump-vs-grad} to obtain
\[
\|\vtheta_k-\vtheta_{k-1}\|
\;\le\;
\frac{1}{\mu_k}\Big(
\big\|\nabla \hat{R}_k(\vtheta_{k-1})\big\|
\;+\;
2\lambda\,C_\Delta\,\sqrt{\hat{\Delta}}_k(\vtheta_{k-1})
\Big).
\]

\smallskip
Plug the last inequality into~\eqref{eq:forgetting-step1} to conclude
\begin{equation}
\begin{split}
    \mathcal{F}_{s\to t}
    &\le
    L_s\,\sum_{k=s+1}^{t}
    \frac{1}{\mu_k}\Big(
    \big\|\nabla \hat{R}_k(\vtheta_{k-1})\big\| \\
    &\qquad\quad +
    2\lambda\,C_\Delta\,\sqrt{\hat{\Delta}_k(\vtheta_{k-1})}
    \Big),
\end{split}
\end{equation}
which is~\eqref{eq:forgetting-final}. Finally, since
$\|\nabla \hat{R}_k(\vtheta_{k-1})\|
\le \|\nabla \hat{\mathcal{L}}_k(\vtheta_{k-1})\|
+ \lambda\|\nabla \hat{\Delta}_k(\vtheta_{k-1})\|
\le \|\nabla \hat{\mathcal{L}}_k(\vtheta_{k-1})\|
+ 2\lambda C_\Delta \sqrt{\hat{\Delta}_k(\vtheta_{k-1})}$ by~\eqref{eq:grad-delta-bound}, the equivalent variant stated in the theorem also follows.
\end{proof}

\section{Bayesian Derivation of Adaptive Entropy Scaling}

We model the adaptive entropy scaling factor \( \gamma_\ell \) in layer \( \ell \) as a latent variable in a Bayesian framework. The goal is to infer a suitable regularization strength for each layer, based on its entropy \( H_\ell \). We use variational inference to approximate the posterior distribution over \( \gamma_\ell \) given the observed entropy.

\subsection{Generative Model}

Let \( \gamma_\ell \in \mathbb{R}_+ \) be the entropy regularization strength for layer \( \ell \), and \( H_\ell \in \mathbb{R}_+ \) the entropy observed at that layer.
We assume a Gaussian noise model for entropy given regularization:
\[
H_\ell = H^* + \frac{c}{\gamma_\ell} + \varepsilon_\ell, \quad \varepsilon_\ell \sim \mathcal{N}(0, \sigma^2)
\]
Thus, the likelihood becomes:
\[
p(H_\ell \mid \gamma_\ell) = \mathcal{N}\left(H_\ell \mid H^* + \frac{c}{\gamma_\ell}, \sigma^2 \right)
\]

We place a log-normal prior on \( \gamma_\ell \) to enforce positivity:
\[
p(\gamma_\ell) = \text{LogNormal}(\mu_0, \tau^2) \quad \Rightarrow \quad \log \gamma_\ell \sim \mathcal{N}(\mu_0, \tau^2)
\]

\subsection{Posterior Distribution}

The posterior over \( \gamma_\ell \) is intractable, so we approximate it via variational inference. Let the variational posterior be:
\[
q_\phi(\gamma_\ell) = \text{LogNormal}(\mu_\phi, \sigma_\phi^2) \quad \Rightarrow \quad \log \gamma_\ell \sim \mathcal{N}(\mu_\phi, \sigma_\phi^2)
\]

We optimize the evidence lower bound (ELBO):
\[
\mathcal{L}(\phi) = \mathbb{E}_{\gamma_\ell \sim q_\phi} \left[
\log p(H_\ell \mid \gamma_\ell)
+ \log p(\gamma_\ell)
- \log q_\phi(\gamma_\ell)
\right]
\]

We place a log-normal prior on \( \gamma_\ell \) to ensure positivity and induce regularization:

\[
\gamma_\ell \sim \text{LogNormal}(\mu_0, \tau^2) \quad \text{which implies} \quad \log \gamma_\ell \sim \mathcal{N}(\mu_0, \tau^2)
\]

The density of the log-normal distribution is:

\[
p(\gamma_\ell) = \frac{1}{\gamma_\ell \tau \sqrt{2\pi}} \exp\left( -\frac{(\log \gamma_\ell - \mu_0)^2}{2\tau^2} \right)
\]

Taking the logarithm gives the log-prior:

\[
\log p(\gamma_\ell) = -\frac{(\log \gamma_\ell - \mu_0)^2}{2\tau^2} - \log \gamma_\ell + \text{const}
\]

Substituting the log-densities:

\begin{align*}
\log p(H_\ell \mid \gamma_\ell) &= -\frac{1}{2\sigma^2} \left(H_\ell - H^* - \frac{c}{\gamma_\ell} \right)^2 + \text{const} \\
\log p(\gamma_\ell) &= -\frac{1}{2\tau^2} (\log \gamma_\ell - \mu_0)^2 - \log \gamma_\ell + \text{const} \\
\log q_\phi(\gamma_\ell) &= -\frac{1}{2\sigma_\phi^2} (\log \gamma_\ell - \mu_\phi)^2 - \log \gamma_\ell + \text{const}
\end{align*}

Therefore, the ELBO becomes:

\begin{align*}
\mathcal{L}(\phi) = \mathbb{E}_{\gamma_\ell \sim q_\phi} \bigg[
& -\frac{1}{2\sigma^2} \left(H_\ell - H^* - \frac{c}{\gamma_\ell} \right)^2 \\
& -\frac{1}{2\tau^2} (\log \gamma_\ell - \mu_0)^2 - \log \gamma_\ell \\
& + \frac{1}{2\sigma_\phi^2} (\log \gamma_\ell - \mu_\phi)^2 + \log \gamma_\ell 
\bigg] + \text{const}
\end{align*}

Observe that the \( -\log \gamma_\ell + \log \gamma_\ell \) terms cancel out, simplifying the ELBO.

\subsection{Reparameterization Trick}

We use the reparameterization trick for gradient estimation:

\[
\log \gamma_\ell = \mu_\phi + \sigma_\phi \cdot \epsilon, \quad \epsilon \sim \mathcal{N}(0, 1)
\]

We can now estimate the ELBO and its gradient using Monte Carlo samples of \( \epsilon \).

\subsection{Deterministic Approximation: GRACE Scaling Rule}

To avoid optimizing \( \mathcal{L}(\phi) \) explicitly during training, we approximate the \textit{posterior mean}:

\[
\hat{\gamma}_\ell = \mathbb{E}_{q_\phi}[\gamma_\ell] = \exp\left( \mu_\phi + \frac{\sigma_\phi^2}{2} \right)
\]

Assuming a small variance \( \sigma_\phi^2 \approx 0 \), we approximate:

\[
\hat{\gamma}_\ell \approx \exp(\mu_\phi)
\]

Empirically, we set:

\[
\mu_\phi \approx \tanh\left( z_\ell \right), \quad \text{where } z_\ell = \frac{H_\ell - \mu_H}{\sigma_H}
\]

This yields the final approximation used in GRACE:

\[
\gamma_\ell \approx \exp\left( \tanh\left( \frac{H_\ell - \mu_H}{\sigma_H} \right) \right)
\]

\section{Ablations}
\subsection{ViT and CUB200}
We test the efficacy of our method on different backbone architectures, namely the vision transformer, and we also test on the more challenging CUB-200 dataset, which requires the model to distinguish between fine grained details. The images are in 224x224 resolution and we use vit-base-patch16-224. As the transformer architecture maintains a constant dimension throughout the network, we can do away with the downsampling and upsampling required with CNNs. We split the 12 layers into 4 blocks, and apply our method to each block attaching classification head to each of these blocks. These experiments were run on a NVIDIA RTX A6000. We use the same experimental setup as \citep{Wei2025Online} and adapt their results. In table \ref{tab:cub200}, we find that our method achieves better results than many previous methods, revealing that our method can be freely applied to many different types of backbone architectures. 

\begin{table}[htbp]
    \centering
    \begin{tabular}{lcc}
        \toprule
        \textbf{Method} & \textbf{$A_{Final}$ ($\uparrow$)} & \textbf{Forgetting ($\downarrow$)} \\
        \midrule
        AGEM          & $10.84 \pm 1.57$ & $47.79 \pm 0.04$ \\
        ER          & $31.66 \pm 0.83$ & $14.23 \pm 0.07$ \\
        EWC++    & $26.14 \pm 3.46$ & $47.69 \pm 0.07$ \\
        MIR          & $31.64 \pm 2.97$ & $23.43 \pm 0.05$ \\
        GDumb       & $9.09 \pm 1.03$  & - \\
        PCR         & $41.11 \pm 1.43$ & $29.64 \pm 1.20$ \\
        DER++         & $26.61 \pm 1.27$ & $32.16 \pm 0.55$ \\
        LODE (DER++) & $39.20 \pm 4.25$ & $41.64 \pm 3.59$ \\
        EMA (DER++)  & $35.26 \pm 3.31$ & $25.55 \pm 3.35$ \\
        EMA (RAR)     & $33.34 \pm 1.11$ & $28.68 \pm 0.56$ \\
        Online-LoRA       & $41.46 \pm 0.31$ & $13.64 \pm 0.68$ \\
       \textbf{GRACE (Ours)  }    & $ \textbf{43.53} \pm \textbf{1.08}$ & $\textbf{9.21} \pm \textbf{2.14}$ \\
        \bottomrule
    \end{tabular}
    \caption{Results on CUB-200}
    \label{tab:cub200}
\end{table}

\subsection{Runtime and Memory}
We compare the runtime of our method and baseline methods and report the results in table \ref{tab:training_cost}. As noted before, our method takes twice as long as methods such as SCR, but we believe the accuracy gain makes up for it. The increase in the number of parameters 
comprises a much smaller percentage, and is expected as we store intermediate activations. Compared against the competitive MOSE method, we are able to see performance gains without a noticeable increase in runtime or memory usage. 

\begin{table}[htbp]
    \centering
    \caption{Training cost comparison for CIFAR-100 with buffer size = 5k.}
    \label{tab:training_cost}
    \begin{tabular}{lcccc}
        \toprule
        Model & Time & \# Params & ACC $\uparrow$ & AF $\downarrow$ \\
        \midrule
        ER     & 3.3m  & 11.3M & $37.7\pm0.4$ & $45.8\pm1.2$ \\
        SCR      & 8.0m & 11.3M & $38.8\pm0.8$ & $10.4\pm0.8$ \\
        MOSE   & 16.7m & 13.0M & $54.9\pm0.4$ & $12.7\pm0.4$ \\
        GRACE (Ours)   & 16.8m & 13.0M & $56.3\pm0.1$ & $11.6\pm0.5$ \\
        \bottomrule
    \end{tabular}
\end{table}

\subsection{Hyperparameter Analysis}
We investigate the effect of batch size on performance in table \ref{tab:cifar100_batch_experiments}. The standard batch size for online continual learning experiments is 10, and we also test on the strict scenario of having batch size 1. The decrease in performance when the batch size is increased to 20 can be attributed to the fact that although the gradient directions may be less noisy, there are fewer update steps for the model to learn the current task.

\begin{table}[htbp]
    \centering
    \caption{CIFAR-100 Accuracy and Average Forgetting across varying batch sizes.}
    \label{tab:cifar100_batch_experiments}
    \begin{tabular}{lcc}
        \toprule
        \textbf{Batch Size} & \textbf{Acc (\%)} & \textbf{AF (\%)} \\ 
        \midrule
        1            &$21.1 \pm 1.4$ & $57.2 \pm 0.9$  \\
        5            &$32.7 \pm 0.8$ & $42.8 \pm 0.4$ \\
        10 (default) & $39.4 \pm 0.4$  & $33.9 \pm 0.3$\\
        20           & $36.1 \pm 0.6$ & $37.7 \pm 0.6$  \\
        \bottomrule
    \end{tabular}
\end{table}

\end{document}